\newcommand{\comment}[1]{}
\renewcommand{\vec}[1]{\bm{#1}}
\newcommand{\D}{\mathrm{d}}
\newcommand{\T}{\intercal}
\newcommand{\Exp}{\mathbb{E}}
\newcommand*\diff{\mathop{}\!\D}
\newcommand{\abs}[1]{\lvert#1\rvert} 
\newcommand{\Abs}[1]{\left\lvert#1\right\rvert} 
\newcommand{\norm}[1]{\lVert#1\rVert}
\newcommand{\vzero}{\bm{0}}
\newcommand{\vtheta}{\bm{\theta}}
\newcommand{\vxi}{\bm{\xi}}
\newcommand{\vTheta}{\bm{\Theta}}
\newcommand{\vb}{\bm{b}}
\newcommand{\vg}{\bm{g}}
\newcommand{\vh}{\bm{h}}
\newcommand{\vk}{\bm{k}}
\newcommand{\vs}{\bm{s}}
\newcommand{\vx}{\bm{x}}
\newcommand{\vz}{\bm{z}}
\newcommand{\vY}{\bm{Y}}
\newcommand{\mzero}{\bm{0}}
\newcommand{\mSigma}{\bm{\Sigma}}
\newcommand{\mU}{\bm{U}}
\newcommand{\mV}{\bm{V}}
\newcommand{\mW}{\bm{W}}
\newcommand{\mX}{\bm{X}}
\DeclareMathAlphabet{\mathsfit}{\encodingdefault}{\sfdefault}{m}{sl}
\SetMathAlphabet{\mathsfit}{bold}{\encodingdefault}{\sfdefault}{bx}{n}
\newcommand{\sR}{\mathbb{R}}
\newcommand{\RS}{R_S}
\newcommand{\dist}{\mathrm{dist}}
 \newtheorem{thm}{Theorem}
 \newtheorem{cor}{Corollary}
 \newtheorem{lem}{Lemma}
 \newtheorem*{rmk*}{\protect Remark}
\title[generalization error induced by initialization in DNN]{A type of generalization error induced by initialization  in deep neural networks}
\begin{document}

\maketitle

\begin{abstract}%
  How initialization and loss function affect the learning of a deep neural network (DNN), specifically its generalization error, is an important problem in practice. In this work, by exploiting the linearity of DNN training dynamics in the NTK regime  \citep{jacot2018neural,lee2019wide}, we provide an explicit and quantitative answer to this problem. Focusing on regression problem, we prove that, in the NTK regime, for any loss in a general class of functions, the DNN finds the same \emph{global} minima---the one that is nearest to the initial value in the parameter space, or equivalently, the one that is closest to the initial DNN output in the corresponding reproducing kernel Hilbert space. Using these optimization problems, we quantify the impact of initial output and prove that a random non-zero one increases the generalization error. We further propose an antisymmetrical initialization (ASI) trick that eliminates this type of error and accelerates the training. To understand whether the above results hold in general, we also perform experiments for DNNs in the non-NTK regime, which demonstrate the effectiveness of our theoretical results and the ASI trick in a qualitative sense. Overall, our work serves as a baseline for the further investigation of the impact of initialization and loss function on the generalization of DNNs, which can potentially guide and improve the training of DNNs in practice.
\end{abstract}

\begin{keywords}%
  Deep neural network; Generalization; Initialization; Kernel regime.
\end{keywords}

\section{Introduction\label{sec:Introduction}}

The wide application of deep learning makes it increasingly urgent
to establish quantitative theoretical understanding of the learning
and generalization behaviors of deep neural networks (DNNs). In this
work, we study theoretically the problem of how initialization and
loss function quantitatively affect these behaviors of DNNs. Our study
focuses on the regression problem, which plays a key role in many
applications, e.g., simulation of physical systems \citep{zhang2018deep},
prediction of time series \citep{qiu2014ensemble} and solving differential
equations \citep{weinan2018deep,xu2019frequency}. For theoretical
analysis, we consider training dynamics of sufficiently-wide DNNs in the NTK (Neural Tanget Kernel) regime, where they can be well approximated by linear gradient flows resembling kernel methods \citep{jacot2018neural,lee2019wide, arora2019fine,arora2019exact}. Note that, theoretical investigation of such a regime
can provide insight into the understanding of general DNNs in practice
by the following facts. Heavy overparameterization is one of the key
empirical tricks to overcome the learning difficulty of DNNs \citep{zhang2016understanding}.
DNNs in the extremely over-parameterized regime preserve substantive
behavior as those in mildly over-parameterized regime. For example,
stochastic gradient descent (SGD) can find global minima of the training
objective of DNNs which generalizes well to the unseen data \citep{zhang2016understanding}. 

In general, the error of DNN can be classified into three general
types \citep{poggio2018theory}: approximation error induced by the
capacity of the hypothesis set, generalization error induced by the
given training data, and training error induced by the given training
algorithm. By the universal approximation theorem \citep{cybenko1989approximation}
and empirical experiments \citep{zhang2016understanding}, a large
neural network often has the power to express functions of real datasets
(small approximation error) and the gradient-based training often
can find global minima (zero training error). Generalization
error is the main source of error in applications. It can be affected
by many factors, such as initialization and loss function as widely
observed in experiments. Empirically, a large weight initialization
often leads to a large generalization error \citep{xu_training_2018,xu2019frequency}.
However, a too small weight initialization makes the training extremely
slow. Note that zero initialization leads to a saddle point of DNN
which makes the training impossible. Despite above empirically observations,
it remains unclear how initialization is related to the generalization
error. Regarding the loss function, it is also unclear how it affects
the behavior of DNNs.

Technically, all our theoretical results are natural consequences of the linearity of NTK dynamics. Our major contribution lies in exploiting this linearity of DNNs in the NTK regime to answer explicitly and quantitatively the important question of how loss function and initialization affect generalization. As demonstrated by experiments, our quantitative results also qualitatively hold for DNNs in the non-NTK regime, thus shedding light into the interaction between loss, initialization and generalization for general DNNs.

Our key results are summarized as follows.

i) We prove that, for a general class of loss functions, the NTK gradient flow, despite trajectory difference, finds the same \emph{global}
minimum.
\comment{
ii) We prove the equivalence among problems of NTK gradient flow, minimum Euclidean norm to initialization in parameter space and minimum kernel norm to initial DNN output $h_{\mathrm{ini}}$ in the corresponding
reproducing kernel Hilbert space (RKHS). 
}

ii) We prove that the bias induced by a nonzero $h_{\mathrm{ini}}$ (initial DNN output) equals to the residual of $h_{\mathrm{ini}}$ trained by the same DNN on the same training inputs initialized with $h_{\mathrm{ini}}=0$. We also show by experiments that this equality approximately holds in non-NTK regime.

iii) We prove that a random $h_{\mathrm{ini}}$ leads to a specific type of generalization error that worsens generalization.

iv) We propose an AntiSymmetrical Initialization (ASI) trick for general DNNs that can offset any $h_{\mathrm{ini}}$ to zero while keeping the NTK at initialization unchanged. Demonstrated by experiments using general DNNs, this trick accelerates the training and improves generalization.

\section{Related works}

The impact of loss and initialization for classical linear regression problem has been discussed in \citet{pmlr-v80-gunasekar18a}. However, such results for deep neural networks are absent. Based on the recent discovery of the NTK regime of DNNs, our work provides quantitative results about loss and initialization, which also sheds light into general DNNs in the non-NTK regime.

There are a series of research on the  NTK regime of DNNs \citep{jacot2018neural,arora2019exact,du2018gradient,zou2018stochastic,allen2018convergence,E2019analysis,E2019comparative,sankararaman2019impact,arora2019fine,cao2019generalization}. Previous works found that  the learning
of DNNs in the NTK regime is equivalent to kernel ridge regression, such as \citet{mei2019mean,banburski2019theory}, and  the convergence point in the parameter space of a DNN remains close to the initialization, such as \citet{chizat2018note,oymak2018overparameterized,jacot2018neural}. For the completeness of the paper, we present proofs of the equivalence among problems of NTK gradient flow, minimum Euclidean norm to initialization in parameter space and minimum kernel norm to initial DNN output $h_{\mathrm{ini}}$ in the corresponding
reproducing kernel Hilbert space (RKHS), which are the foundation of our results. 

Previous works \citep{xu_training_2018,xu2018understanding,xu2018frequency,xu2019frequency,rahaman2018spectral}
discover a Frequency-Principle (F-Principle) that DNNs prefer to learn
the training data by a low-frequency function. Based on F-principle,
\citet{xu_training_2018,xu2019frequency} postulate that the final
output of a DNN tends to inherit high frequencies of its initial output
that can not be well constrained by the training data \citep{xu_training_2018,xu2019frequency}.
Our theoretical and empirical results on initialization provide justification of this postulate.

\section{Preliminary}

\subsection{Notations}
$\Omega:$ a compact domain of $\sR^{d}$; $d$: dimension
of input of DNN; $f$: target function, $f\in L^{\infty}(\Omega)$;
$M$: number of parameter of DNN; $n$: number of training
samples; $\mX$: inputs of training set $(\vx_{1},\cdots,\vx_{n})^{\T}\in\sR^{n\times d}$;
$\vY$: outputs of training set $(y_{1},\cdots,y_{n})^{\T}\in\sR^{n}$;
$\vg(\mX)$: $(g(\vx_{1}),\cdots,g(\vx_{n}))^{\T}$ for any function $g$ on $\Omega$; $h(\vx,\vtheta)$:
output of DNN of parameters $\vtheta$ at $\vx$; $\nabla_{\vtheta}(\cdot)$:
$(\partial_{\theta_1}(\cdot),\cdots,\partial_{\theta_m}(\cdot))^{\T}$;
$\dist$: a general differentiable loss function satisfying conditions
(i)--(iii) in Section \ref{subsec:Kernel-regime}; $k(\cdot,\cdot)$:
kernel function defined as $\nabla_{\vtheta}h(\cdot,\vtheta_{0})^{\T}\nabla_{\vtheta}h(\cdot,\vtheta_{0})$
if there is no ambiguity; $k_{\vtheta'}(\cdot,\cdot)$: kernel function
defined as $\nabla_{\vtheta}h(\cdot,\vtheta')^{\T}\nabla_{\vtheta}h(\cdot,\vtheta')$;
$H_{k}(\Omega)$: reproducing kernel Hilbert space (RKHS)
with respect to kernel $k$ at domain $\Omega$. $\left\langle \cdot,\cdot\right\rangle _{k}$:
inner product of space $H_{k}(\Omega)$; $\norm{\cdot}_{k}$:
norm of space $H_{k}(\Omega)$; $h_{k}(\vx;h_\mathrm{ini},\mX,\vY)$:
the solution of problem \eqref{eq: problem min kernel norm} depending
on kernel $k$, initial function $h_\mathrm{ini}$, inputs $\mX$ and
outputs $\vY$ of training set. 

\subsection{NTK regime of DNN\label{subsec:Kernel-regime}}

In the following, we consider the regression problem of fitting the
target function $f\in L^{\infty}(\Omega)$, where $\Omega$ is a compact
domain in $\sR^{d}$. Clearly, $f\in L^{p}(\Omega)$ for $1\leq p\leq \infty$. Specifically,
we use a DNN, $h(\vx,\vtheta(t)):\Omega\times\sR^{m}\to\sR$,
to fit the training dataset $\{(\vx_{i},y_{i})\}_{i=1}^{n}$ of $n$ sampling
points, where $\vx_{i}\in\Omega$, $y_{i}=f(\vx_{i})$ for each $i$.
For the convenience of notation, we denote $\mX=(\vx_{1},\cdots,\vx_{n})^{\T}$,
$\vY=(y_{1},\cdots,y_{n})^{\T}$, and $\vg(\mX):=(g(\vx_{1}),\cdots,g(\vx_{n}))^{\T}$ for any function $g$ defined on $\Omega$.

The NTK regime refers to a state of DNN that its NTK defined as  
\begin{equation*}
k(\cdot,\cdot)=\nabla_{\vtheta}h(\cdot,\vtheta(t))^{\T}\nabla_{\vtheta}h(\cdot,\vtheta(t))
\end{equation*}
almost does not change throughout the training \citep{jacot2018neural,chizat2018note,arora2019exact}. Note that,
we have the following requirements for $h$ which are
easily satisfied for common DNNs: For any $\vtheta\in\sR^{M}$,
there exists a weak derivative of $h\left(\cdot,\vtheta\right)$
with respect to $\vtheta$ and $\nabla_{\vtheta}h\left(\cdot,\vtheta\right)\in L^{2}(\Omega;\sR^M)$.  In the NTK regime, a DNN can be accurately approximated throughout the training as
\begin{equation*}
  h(\vx,\vtheta)\approx h\left(\vx,\vtheta_{0}\right)+\nabla_{\vtheta}h\left(\vx,\vtheta_{0}\right)\cdot\left(\vtheta-\vtheta_{0}\right),
\end{equation*}
which is the Taylor expansion of the DNN output function at initialization $\vtheta_{0}$ up to first order. Note that, for convenience, we also use $\vtheta_0$ for $\vtheta(0)$, i.e., the initial parameter set. 

We restrict our work in the the NTK regime of DNNs.
It has been shown in \citet{jacot2018neural,chizat2018note,lee2019wide} that, for
any $t\geq0$, by scaling the initial parameters of layer $l$ of width $m_l$ by $1/\sqrt{m_l}$ as $m_l \to \infty$ for all $l$, $\abs{\vtheta(t)-\vtheta(0)}\to 0$ indicating a NTK regime. Moreover, in \citet{arora2019exact}, a non-asymptotic proof is provided that relaxes the infinite width condition to certain sufficiently large width. For simplicity of our analysis, we consider the linearized DNN, i.e., 
\begin{equation}
  h(\vx,\vtheta)= h\left(\vx,\vtheta_{0}\right)+\nabla_{\vtheta}h\left(\vx,\vtheta_{0}\right)\cdot\left(\vtheta-\vtheta_{0}\right).\label{eq:linear}
\end{equation}
For the closeness of the linearied model and the DNN trained by gradient descent, We refer readers to \citet{arora2019exact}. In the NTK regime, for the loss function (also known as the emprical risk)
\begin{equation*}
  \RS(\vtheta)=\dist\left(\vh(\mX,\vtheta),\vY\right),
\end{equation*}
where $\dist$ is the distance function to be explained in Section \ref{sec:ROF}, 
the NTK gradient flow, i.e.,  gradient flow of the linearized
model $h(\vx,\vtheta(t))$, in parameter space follows
\begin{equation}
  \frac{\D \vtheta(t)}{\D t}=-\nabla_{\vtheta}\vh(\mX,\vtheta_{0})\nabla_{\vh(\mX,\vtheta(t))}\dist\left(\vh(\mX,\vtheta(t)),\vY\right),\label{eq:gdh}
\end{equation}
with initial value $\vtheta(0)=\vtheta_{0}$, where $\nabla_{\vtheta}\vh(\mX,\vtheta_{0})\in\sR^{m\times n}$, and 
$(\nabla_{\vtheta}\vh(\mX,\vtheta_{0}))_{ki}=\nabla_{\theta_{k}}h(\vx_{i},\vtheta_{0})$, and
$\nabla_{\vh(\mX,\vtheta(t))}\dist\left(\vh(\mX,\vtheta(t)),\vY\right)\in\sR^{n}$.  
In function space, the NTK gradient flow yields,
\begin{equation}
  \partial_{t}h(\vx,t)=-\vk(\vx,\mX)\nabla_{\vh(\mX,t)}\dist\left(\vh(\mX,t),\vY\right),\label{eq:gfh}
\end{equation}
with $h(\vx,t)=h(\vx,\vtheta(t))$, initial value $h(\cdot,0)=h(\cdot,\vtheta_{0})$, and kernel
$k\in L^{2}(\Omega\times\Omega)$ defined as
\begin{equation*}
  k(\cdot,\cdot)=\nabla_{\vtheta}h(\cdot,\vtheta_{0})^{\T}\nabla_{\vtheta}h(\cdot,\vtheta_{0}),
\end{equation*}
where $\nabla_{\vtheta}h(\cdot,\vtheta_{0})=(\partial_{\theta_{1}}h(\cdot,\vtheta_{0}),\cdots,\partial_{\theta_{m}}h(\cdot,\vtheta_{0}))^{\T}\in\sR^{m\times 1}$,
$\vk(\vx,\mX)\in\sR^{1\times n}$ for any $\vx\in\Omega$. Note that
Eq. \eqref{eq:gfh} of $h$ is a closed system. By \citet{jacot2018neural},
$k$ is symmetric and positive semi-definite. In the following, we
may denote $k_{\vtheta_0}(\cdot,\cdot)=\nabla_{\vtheta}h(\cdot,\vtheta_0)^{\T}\nabla_{\vtheta}h(\cdot,\vtheta_0)$
when we need to differentiate kernels corresponding to different architectures
or different initializations of DNNs.

\subsection{Reproducing kernel Hilbert space (RKHS)}

The kernel $k$ can induce a RKHS as follows. First, we cite the Mercer's
theorem (\citet{mercer1909xvi}).
\begin{thm}
  (Mercer's theorem (\citet{mercer1909xvi})) Suppose $k$ is a continuous
  symmetric positive semi-definite kernel. Then there is an orthonormal
  basis $\{\phi_{j}\}$ of $L^{2}(\Omega)$ consisting of eigenfunctions
  of $T_{k}$ defined as $\left[T_{k}g\right](\cdot)=\int_{\Omega}k(\cdot,\vx)g(\vx)\diff{\vx}$
  such that the corresponding sequence of eigenvalues $\sigma_{j}$
  is nonnegative. The eigenfunctions corresponding to non-zero eigenvalues
  are continuous on $\Omega$ and $k$ has the representation
  \begin{equation*}
    k(\vx,\vx')=\sum_{j=1}^{\infty}\sigma_{j}\phi_{j}(\vx)\phi_{j}(\vx'),
  \end{equation*}
  where the convergence is absolute and uniform.
\end{thm}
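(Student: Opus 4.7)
The plan is to derive the result from the spectral theorem for compact self-adjoint operators together with a Dini-type argument for the uniform convergence. First I would verify that the integral operator $T_k$ on $L^2(\Omega)$ is well-behaved: because $\Omega$ is compact and $k$ is continuous, $k$ is bounded, hence $k \in L^2(\Omega \times \Omega)$, so $T_k$ is a Hilbert--Schmidt operator and in particular compact. Symmetry of $k$ gives self-adjointness of $T_k$, and the positive semi-definiteness hypothesis yields $\langle T_k g, g \rangle_{L^2} \geq 0$ for every $g \in L^2(\Omega)$. The classical spectral theorem for compact self-adjoint operators then supplies an orthonormal basis $\{\phi_j\}$ of $L^2(\Omega)$ of eigenfunctions with real eigenvalues $\sigma_j$, and the positivity estimate forces $\sigma_j \geq 0$.

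For the continuity of eigenfunctions associated with $\sigma_j \neq 0$, I would invoke the defining identity
\[
\phi_j(\vx) = \sigma_j^{-1} \int_{\Omega} k(\vx, \vy) \phi_j(\vy) \diff \vy,
\]
and deduce continuity from the uniform continuity of $k$ on the compact set $\Omega \times \Omega$ together with Cauchy--Schwarz, which in addition gives $\phi_j \in L^\infty(\Omega)$.

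The heart of the argument is the series representation and its mode of convergence. Let $k_N(\vx,\vx') = \sum_{j=1}^N \sigma_j \phi_j(\vx)\phi_j(\vx')$ and $r_N = k - k_N$. A direct computation shows that $r_N$ is the kernel of a positive semi-definite operator (its nonzero eigenvalues are precisely $\sigma_j$ for $j > N$), so $r_N(\vx,\vx) \geq 0$; thus $k_N(\vx,\vx)$ is nondecreasing in $N$ and bounded above by $k(\vx,\vx)$. Combined with the pointwise identity $\sum_j \sigma_j \phi_j(\vx)^2 = k(\vx,\vx)$ (obtained from the $L^2$ Fourier expansion $k(\vx,\cdot) = \sum_j \sigma_j \phi_j(\vx) \phi_j$ and positivity of $r_N$), one has monotone pointwise convergence of continuous functions on the compact set $\Omega$ to a continuous limit, so Dini's theorem upgrades the convergence to uniform convergence on the diagonal. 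The off-diagonal case then follows from Cauchy--Schwarz:
\[
\Bigl|\sum_{j=N+1}^{M} \sigma_j \phi_j(\vx)\phi_j(\vx')\Bigr| \leq \Bigl(\sum_{j=N+1}^{M} \sigma_j \phi_j(\vx)^2\Bigr)^{1/2} \Bigl(\sum_{j=N+1}^{M} \sigma_j \phi_j(\vx')^2\Bigr)^{1/2},
\]
giving absolute and uniform convergence on $\Omega \times \Omega$.

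The main obstacle is establishing the pointwise identity $\sum_j \sigma_j \phi_j(\vx)^2 = k(\vx,\vx)$ that feeds Dini's theorem: expanding $k(\vx,\cdot)$ in the basis only gives an $L^2$-convergent series and the a priori estimate $\sum_j \sigma_j^2 \phi_j(\vx)^2 < \infty$, so one must argue separately---using positivity of each remainder kernel to control the partial sums directly at the diagonal---to reach the identity evaluated pointwise. Once uniform convergence on the diagonal is secured, the remaining steps are essentially routine bookkeeping.
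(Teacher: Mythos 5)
The paper does not prove this statement: it is quoted verbatim as a classical result with a citation to \citet{mercer1909xvi}, so there is no in-paper proof to compare against. Your sketch is the standard proof of Mercer's theorem and is essentially correct: compactness and self-adjointness of the Hilbert--Schmidt operator $T_k$, nonnegativity of the spectrum from positive semi-definiteness, continuity of eigenfunctions with $\sigma_j\neq 0$ from the eigenvalue identity, positivity of the remainder kernels $r_N$ to bound the diagonal partial sums, Dini's theorem on the diagonal, and Cauchy--Schwarz off the diagonal. You also correctly identify the delicate point, namely upgrading the $L^2$ expansion $k(\vx,\cdot)=\sum_j\sigma_j\phi_j(\vx)\phi_j$ to the pointwise identity $\sum_j\sigma_j\phi_j(\vx)^2=k(\vx,\vx)$; the clean way to finish that step is to note that, for fixed $\vx$, the bound $\sum_{j\le N}\sigma_j\phi_j(\vx)^2\le k(\vx,\vx)\le\sup k$ together with Cauchy--Schwarz makes the series converge uniformly in $\vx'$ to a continuous function that agrees with $k(\vx,\cdot)$ almost everywhere, hence everywhere, and in particular at $\vx'=\vx$. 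Two small steps you assert without detail deserve a line each in a full write-up: (i) passing from positivity of the operator $T_{r_N}$ to the pointwise inequality $r_N(\vx,\vx)\ge 0$ requires continuity of $r_N$ (which holds, since the finitely many $\phi_j$ involved with $\sigma_j\neq 0$ are continuous) plus an approximate-identity argument testing against normalized indicators of shrinking balls; and (ii) the orthonormal \emph{basis} of eigenfunctions is obtained by adjoining an arbitrary orthonormal basis of $\ker T_k$ (eigenvalue $0$) to the eigenfunctions of the nonzero eigenvalues. Neither is a gap in the approach; both are routine.
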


Then, we can define the RKHS as $H_{k}(\Omega):=\{g\in L^{2}(\Omega)\mid\sum_{i=1}^{\infty}\sigma_{i}^{-1}\left\langle g,\phi_{i}\right\rangle ^{2}_{L^2(\Omega)}<\infty\}$,
and the inner product in $H_{k}(\Omega)$ is given by
\begin{equation*}
  \left\langle f,g\right\rangle_{k}=\sum_{i=1}^{\infty}\sigma_{i}^{-1}\left\langle f,\phi_{i}\right\rangle _{L^{2}(\Omega)}\left\langle g,\phi_{i}\right\rangle _{L^{2}(\Omega)},
\end{equation*}
where $\left\langle g,\phi_{i}\right\rangle _{L^{2}(\Omega)}=\int_{\Omega}g(\vx)\phi_{i}(\vx)\diff{\vx}$.
Define $k^{-1}(\vx,\vx')=\sum_{i=1}^{\infty}\sigma_{i}^{-1}\phi_{i}(\vx)\phi_{i}(\vx')$,
then the kernel norm of any $g\in H_{k}(\Omega)$ can be
expressed as
\begin{equation*}
  \norm{g}_{k}=\left\langle g,g\right\rangle_{k}^{1/2}=\left(\int_{\Omega\times\Omega}g(\vx)g(\vx')k^{-1}(\vx,\vx')\diff{\vx'}\diff{\vx}\right)^{\frac{1}{2}}.
\end{equation*}
$H_{k}(\Omega)$ satisfies \citet{berlinet2004reproducing}:
(i) $\forall \vx\in\Omega,$$k(\cdot,\vx)\in H_{k}(\Omega)$;
(ii) $\forall \vx\in\Omega$, $\forall f\in H_{k}$, $\left\langle f(\cdot),k(\cdot,\vx)\right\rangle _{k}=f(\vx)$;
(iii) $\forall \vx,\vx'\in\Omega$, $\left\langle k(\cdot,\vx),k(\cdot,\vx')\right\rangle _{k}=k(\vx,\vx')$.

\section{Equivalent optimization problems in the NTK regime
\label{sec:ROF}}

For the completeness, we introduce the optimization problems equivalent to the training of DNNs in NTK regime, which appears explicitly or implicitly in previous works  \citep{chizat2018note,oymak2018overparameterized,jacot2018neural,mei2019mean,banburski2019theory}. As introduced in Section
\ref{subsec:Kernel-regime}, for the analysis of gradient flow of
$h_{\mathrm{DNN}}(\cdot,\vtheta_{\mathrm{DNN}}(t))$ in the kernel
regime, we focus on the gradient flow of its linearized model $h(\cdot,\vtheta(t))$,
i.e., Eqs. (\eqref{eq:gdh}, \eqref{eq:gfh}).

We consider the gradient flow under any loss $\RS(\vtheta)=\dist\left(\vh(\mX,\vtheta),\vY\right)$,
where $\dist$ is continuously differentiable and satisfies, for any $\vz\in\sR^{n}$,
(i) $\dist(\vz,\vz)=0$; (ii) $\dist(\vz',\vz)$ attains minimum if and only if $\vz'=\vz$.
(iii) $\vz'=\vz$ if and only if $\nabla_{\vz'}\dist(\vz',\vz)=0$. For example,
$\dist\left(\vh(\mX,\vtheta),\vY\right)=\frac{1}{n}\sum_{i=1}^{n}\left|h(\vx_{i},\vtheta)-y_{i}\right|^{p}$
for any $1<p<\infty$. By Theorem \ref{thm:equivalence} in Appendix
\ref{subsec:EquiTheorems}, the long time solution $\vtheta(\infty)=\lim_{t\to\infty}\vtheta(t)$
of dynamics \eqref{eq:gdh} is equivalent to the solution of the optimization
problem
\begin{equation}
  \min_{\vtheta}\norm{\vtheta-\vtheta_{0}}_{2},\quad s.t.,\quad \vh(\mX,\vtheta)=\vY.\label{eq:hopt}
\end{equation}
By Theorem \ref{thm:min kernel norm} in Appendix \ref{subsec:EquiTheorems},
$h(\vx,\vtheta(\infty))$ uniquely solves the optimization problem 
\begin{equation}
  \min_{h-h_\mathrm{ini}\in H_{k}(\Omega)}\norm{h-h_\mathrm{ini}}_{k},\quad s.t.,\quad \vh(\mX)=\vY,\label{eq: problem min kernel norm}
\end{equation}
where $h_\mathrm{ini}(\vx)=h(\vx,\vtheta_{0})$ and the constraints $\vh(\mX)=\vY$
are in the sense of trace (\citet{evans2010partial}, pp. 257--261).
The above results hold for any initial value $\vtheta_{0}$. We refer
to \emph{kernel-norm minimization framework} as using the optimization
problem (\ref{eq:hopt}) or (\ref{eq: problem min kernel norm}) to
analyze the long time solution of gradient flow dynamics in (\ref{eq:gdh})
or (\ref{eq:gfh}), respectively. 

With this framework, we emphasize the following results. First, for
a finite set of training data, given $\vtheta_{0}$, because $\dist$ is
absent in problems (\ref{eq:hopt}) and (\ref{eq: problem min kernel norm}),
the output function of a well-trained DNN in the NTK regime is
invariant to different choices of loss functions. Note that this result
is surprising in the sense that different $\dist$ clearly leads to different
trajectories of $\text{\ensuremath{\vtheta(t)}}$ and $h(\cdot,\vtheta(t))$.
Based on this result, it is not necessary to stick to commonly used
MSE loss for regression problems. For example, in practice, one can
use $\dist\left(\vh(\mX,\vtheta),\vY\right)=\frac{1}{n}\sum_{i=1}^{n}\left|h(\vx_{i},\vtheta)-y_{i}\right|^{p}$
of $1<p<2$ to accelerate the training of DNN near convergence or
$2<p<\infty$ to accelerate the training near initialization. One
can even mixing different loss functions to further boost the training
speed. Second, among all sets of parameters that fit the training
data, a DNN in the NTK regime always finds the one closest to the
initialization in the parameter space with respect to the $L^{2}$
norm. Third, in the functional space, this framework shows that DNNs
always seek to learn a function that has a shortest distance (with
respect to the kernel norm) to the initial output function. In the
following, we denote $h_{k}(\vx;h_\mathrm{ini},\mX,\vY)$ as the solution
of problem \eqref{eq: problem min kernel norm} depending on $k$,
$h_\mathrm{ini}$, $\mX$ and $\vY$. Remark that, for losses used in classification problems, e.g., cross-entropy, the global minimum is reached as $\vtheta \to \infty$, resulting a nonlinear training dynamics that cannot be captured by NTK. Therefore, our above results does not extend to these cases.

\section{Impact of non-zero initial output}

Problems (\ref{eq:hopt}) and (\ref{eq: problem min kernel norm})
explicitly incorporate the effect of initialization, thus enabling
us to study quantitatively its impact to the learning of DNNs. In
this section, we use the above framework to show that a random non-zero
initial DNN output leads to a specific type of generalization error.
We begin with a relation between the solution with zero initial output
and that with non-zero initial output. Proofs of the following theorems
can be found in Appendix \ref{sec:AppendixNegative-impact-of}.
\begin{thm}
  \label{thm:inipredict}For a fixed kernel function $k\in L^{2}(\Omega\times\Omega)$,
  and training set $\{\mX;\vY\}$, for any initial function $h_{\mathrm{ini}}\in L^{\infty}(\Omega)$,
  $h_{k}(\cdot;h_\mathrm{ini},\mX,\vY)$ can be decomposed as
  \begin{equation}
  h_{k}(\cdot;h_\mathrm{ini},\mX,\vY)=h_{k}(\cdot;0,\mX,\vY)+h_\mathrm{ini}-h_{k}(\cdot;0,\mX,h_\mathrm{ini}(\mX)).\label{eq:networkpredictini}
  \end{equation}
\end{thm}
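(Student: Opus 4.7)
The plan is a two-step argument exploiting the structure of problem \eqref{eq: problem min kernel norm}. First, I change variables $g := h - h_{\mathrm{ini}}$. By the feasible-set constraint $g \in H_k(\Omega)$, $\norm{h - h_{\mathrm{ini}}}_{k} = \norm{g}_{k}$, and the trace-sense constraint $\vh(\mX) = \vY$ rewrites as $\vg(\mX) = \vY - h_{\mathrm{ini}}(\mX)$. Hence \eqref{eq: problem min kernel norm} is equivalent to $\min_{g \in H_{k}(\Omega)} \norm{g}_{k}$ subject to $\vg(\mX) = \vY - h_{\mathrm{ini}}(\mX)$, whose unique minimizer is by definition $h_{k}(\cdot; 0, \mX, \vY - h_{\mathrm{ini}}(\mX))$; undoing the substitution gives
\begin{equation*}
  h_{k}(\cdot; h_{\mathrm{ini}}, \mX, \vY) = h_{\mathrm{ini}} + h_{k}(\cdot; 0, \mX, \vY - h_{\mathrm{ini}}(\mX)).
\end{equation*}

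The second step is to establish that the target-to-interpolant map $\vz \mapsto h_{k}(\cdot; 0, \mX, \vz)$ is linear, which yields
\begin{equation*}
  h_{k}(\cdot; 0, \mX, \vY - h_{\mathrm{ini}}(\mX)) = h_{k}(\cdot; 0, \mX, \vY) - h_{k}(\cdot; 0, \mX, h_{\mathrm{ini}}(\mX)),
\end{equation*}
so that substituting into the previous display recovers \eqref{eq:networkpredictini}. For linearity, I would invoke the representer theorem: the minimizer of $\norm{g}_{k}$ subject to $\vg(\mX) = \vz$ has the form $g^{\ast}(\vx) = \vk(\vx, \mX)\valpha$ with $\mK\valpha = \vz$, where $\mK = (k(\vx_{i}, \vx_{j}))_{i,j=1}^{n}$; hence $\valpha = \mK^{-1}\vz$ (or $\mK^{+}\vz$ if $\mK$ is only positive semi-definite), and $g^{\ast}$ depends linearly on $\vz$.

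The main subtlety to handle is well-posedness of each term on the right-hand side of \eqref{eq:networkpredictini} in the rank-deficient case: when $\mK$ is singular, the interpolation problem is feasible only for $\vz \in \mathrm{range}(\mK)$, so one needs both $\vY$ and $h_{\mathrm{ini}}(\mX)$ individually in this range, not merely their difference. I would resolve this either via the orthogonal $H_{k}$-projection formulation of the min-norm interpolant, which exhibits linear dependence on $\vz$ across admissible translates within $\mathrm{range}(\mK)$, or by observing that for wide DNNs with distinct training inputs the NTK Gram matrix is strictly positive definite, so $\mathrm{range}(\mK) = \sR^{n}$ and the subtlety vanishes in the intended regime. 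Once this is in place, the two displays above immediately combine to give \eqref{eq:networkpredictini}.
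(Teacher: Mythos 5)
Your proposal is correct and its first step is exactly the paper's: substitute $g=h-h_{\mathrm{ini}}$ to identify $h_{k}(\cdot;h_{\mathrm{ini}},\mX,\vY)-h_{\mathrm{ini}}$ with $h_{k}(\cdot;0,\mX,\vY-h_{\mathrm{ini}}(\mX))$, then conclude by linearity of $\vz\mapsto h_{k}(\cdot;0,\mX,\vz)$. Where you diverge is in how that linearity is established. The paper proves it dynamically (its Lemma \ref{linearity of solution-1}): it writes down the MSE gradient flow $\partial_t h=-\vk(\cdot,\mX)(\vh(\mX,t)-\vY)$ for labels $\vY_1$ and $\vY_2$ separately, observes that the sum of the two trajectories solves the flow for $\vY_1+\vY_2$ by superposition of the linear ODE, and then appeals to Corollary \ref{cor:equiv-h} to transfer this to the min-kernel-norm solutions. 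You instead argue statically via the representer theorem, writing the min-norm interpolant as $\vk(\cdot,\mX)\valpha$ with $\mK\valpha=\vz$, so that linearity in $\vz$ is read off from $\valpha=\mK^{-1}\vz$ (or $\mK^{+}\vz$). Both routes are valid. Yours is more self-contained and standard from the kernel-methods viewpoint, and it has the virtue of making the feasibility issue explicit: you correctly note that in the rank-deficient case one needs $\vY$ and $h_{\mathrm{ini}}(\mX)$ individually in $\mathrm{range}(\mK)$, not just their difference, and that this is harmless under the paper's standing full-rank assumption on $\nabla_{\vtheta}\vh(\mX,\vtheta_0)$ (which makes $\mK$ strictly positive definite). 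The paper's route buys consistency with its own machinery --- the same equivalence theorems underpin the rest of the results --- but it quietly relies on the same full-rank hypothesis through Theorem \ref{thm:equivalence}, so neither argument is more general than the other in substance.
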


This theorem unravels quantitatively the impact of a nonzero initialization,
i.e., $h_\mathrm{ini}\neq0$, to the output function of a well-trained
DNN in the NTK regime. Comparing the dynamics in \eqref{eq:gfh}
of zero and non-zero initialization, at the beginning, the difference
of DNN output is $h_{\mathrm{ini}}$, whereas, at the end of the training,
that difference shrinks to $h_\mathrm{ini}-h_{k}(\cdot;0,\mX,h_\mathrm{ini}(\mX))$,
which is the residual of fitting $h_{\mathrm{ini}}$ sampled at $\mX$
by the same DNN. Note that \citet{geiger2019scaling}   figures out qualitatively
that $h_{\mathrm{ini}}$, which does not vanish as the width of DNN
tends to infinity, decreases during the training. However, they do
not arrive at a quantitative relation as revealed by Theorem \ref{thm:inipredict}.

The expected generalization error of DNN with a random non-zero initial
output can be estimated as follows.
\begin{thm}
  \label{thm:iniextraerror}For a target function $f\in L^{\infty}(\Omega)$,
  if $h_{\mathrm{ini}}$ is generated from an unbiased distribution of random functions
   $\mu$ such that $\Exp_{h_\mathrm{ini}\sim \mu}h_{\mathrm{ini}}=0$,
  then the generalization error of $h_{k}(\cdot;h_\mathrm{ini},\mX,f(\mX))$
  can be decomposed as follows 
  \begin{align*}
    \Exp_{h_\mathrm{ini}\sim \mu}\RS\left(h_{k}(\cdot;h_\mathrm{ini},\mX,f(\mX)),f\right)
    &= \RS\left(h_{k}(\cdot;0,\mX,f(\mX)),f\right)\\
    &~~~~+\Exp_{h_\mathrm{ini}\sim \mu}\RS\left(h_{k}(\cdot;0,\mX,h_\mathrm{ini}(\mX)),h_\mathrm{ini}\right),
  \end{align*}
  where $\RS(h_{k}(\cdot;h_\mathrm{ini},\mX,f(\mX)),f)=\norm{h_{k}(\cdot;h_\mathrm{ini},\mX,f(\mX))-f}_{L^{2}(\Omega)}^{2}$.
\end{thm}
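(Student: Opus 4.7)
The plan is to reduce the statement to a variance decomposition by exploiting Theorem~\ref{thm:inipredict}. Applying that theorem with $\vY = f(\mX)$, I decompose the trained output as
\begin{equation*}
h_{k}(\cdot;h_\mathrm{ini},\mX,f(\mX)) - f
= \underbrace{\bigl(h_{k}(\cdot;0,\mX,f(\mX)) - f\bigr)}_{=: A}
+ \underbrace{\bigl(h_\mathrm{ini} - h_{k}(\cdot;0,\mX,h_\mathrm{ini}(\mX))\bigr)}_{=: B(h_\mathrm{ini})}.
\end{equation*}
Here $A \in L^2(\Omega)$ is deterministic (independent of $h_\mathrm{ini}$), while $B(h_\mathrm{ini})$ depends on the random initial output. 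Expanding the squared $L^2$ norm gives
\begin{equation*}
\norm{h_{k}(\cdot;h_\mathrm{ini},\mX,f(\mX)) - f}_{L^2(\Omega)}^{2}
= \norm{A}_{L^2(\Omega)}^{2} + 2\left\langle A, B(h_\mathrm{ini})\right\rangle_{L^2(\Omega)} + \norm{B(h_\mathrm{ini})}_{L^2(\Omega)}^{2},
\end{equation*}
so the decomposition claimed in the theorem follows immediately once I show that the expected cross term vanishes, that is, $\Exp_{h_\mathrm{ini}\sim\mu}B(h_\mathrm{ini}) = 0$ in $L^2(\Omega)$.

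The main step is therefore to establish that the map $h_\mathrm{ini} \mapsto B(h_\mathrm{ini})$ is linear. The map $h_\mathrm{ini} \mapsto h_\mathrm{ini}(\mX) \in \sR^n$ is clearly linear (pointwise sampling), and the solution operator $\vY \mapsto h_{k}(\cdot;0,\mX,\vY)$ of problem \eqref{eq: problem min kernel norm} with zero initial value is linear in $\vY$; this can be seen either from the closed-form $h_k(\vx;0,\mX,\vY) = \vk(\vx,\mX)\bigl(\vk(\mX,\mX)\bigr)^{-1}\vY$ (on the range of the Gram matrix) provided by the representer-style argument behind Theorem~\ref{thm:min kernel norm}, or by noting that the constraint set is an affine subspace of $H_k(\Omega)$ that scales linearly with $\vY$ while the objective is the squared RKHS norm. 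Composing these two linear maps shows that $h_\mathrm{ini} \mapsto h_{k}(\cdot;0,\mX,h_\mathrm{ini}(\mX))$, and hence $B$, is linear in $h_\mathrm{ini}$.

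Given this linearity, I exchange expectation and the (bounded) linear operator to obtain
\begin{equation*}
\Exp_{h_\mathrm{ini}\sim\mu} B(h_\mathrm{ini}) = \Exp_{h_\mathrm{ini}\sim\mu} h_\mathrm{ini} - h_{k}\bigl(\cdot;0,\mX,(\Exp_{h_\mathrm{ini}\sim\mu}h_\mathrm{ini})(\mX)\bigr) = 0,
\end{equation*}
where both terms vanish by the unbiasedness assumption $\Exp_{h_\mathrm{ini}\sim\mu}h_\mathrm{ini} = 0$. Taking expectation of the expanded squared norm, the cross term drops out, and the remaining two terms are precisely $\RS(h_k(\cdot;0,\mX,f(\mX)),f)$ and $\Exp_{h_\mathrm{ini}\sim\mu}\RS(h_k(\cdot;0,\mX,h_\mathrm{ini}(\mX)),h_\mathrm{ini})$, completing the proof.

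The main obstacle I anticipate is the technical justification of exchanging expectation with the solution operator and the $L^2$ inner product: the distribution $\mu$ must give rise to $h_\mathrm{ini}$ for which $B(h_\mathrm{ini})$ is integrable as an $L^2(\Omega)$-valued random variable (i.e.\ Bochner integrable), so that Fubini can be invoked. Under mild moment assumptions on $\mu$ (e.g.\ $\Exp \norm{h_\mathrm{ini}}_{L^\infty(\Omega)} < \infty$, consistent with the $L^\infty$ setting used throughout the paper) together with boundedness of the linear solution map on the relevant domain, this is routine but should be stated cleanly.
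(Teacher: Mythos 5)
Your proposal is correct and follows essentially the same route as the paper: decompose via Theorem~\ref{thm:inipredict}, expand the squared $L^2$ norm, and kill the cross term using linearity of the zero-initialization solution map (the paper's Lemma~\ref{linearity of solution-1}) together with $\Exp_{h_\mathrm{ini}\sim\mu}h_\mathrm{ini}=0$. Your added remark on Bochner integrability is a reasonable technical point that the paper passes over silently, but it does not change the argument.
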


By the above theorem, $\Exp_{h_\mathrm{ini}\sim \mu}\RS\left(h_{k}(\cdot;0,\mX,h_\mathrm{ini}(\mX)),h_\mathrm{ini}\right)\geq0$
is a specific type of generalization error induced by $h_{\mathrm{ini}}$.
Clearly, this error decreases as the sample size $n$ increases and
as $n\to\infty$, $h_\mathrm{ini}-h_{k}(\cdot;0,\mX,h_\mathrm{ini}(\mX))\to0$,
which conforms with our intuition that if the optimization is sufficiently
constrained by the training data, then the effect of initialization
can be ignored. For real datasets of a limited number of training
samples, this error is in general non-zero. By F-Principle \citep{xu_training_2018,xu2019frequency},
DNNs tend to fit training data by low frequency functions. Therefore,
qualitatively, $h_\mathrm{ini}-h_{k}(\cdot;0,\mX,h_\mathrm{ini}(\mX))$
consists mainly of the high frequencies of $h_\mathrm{ini}$ which
cannot be well constrained at $\mX$.

\section{AntiSymmetrical Initialization trick (ASI)}

In general, from the Bayesian inference perspective, for fixed $k$,
a random $h_{\mathrm{ini}}$ introduces a prior to the inference that
is irrelevant to the target function, thus should lower the accuracy
of inference. Specifically, the scaling of $1/\sqrt{m_l}$ for NTK leads to a random initial function $h_{\rm{ini}}\sim O(1)$, which cannot be neglected for the analysis. To eliminate the negative impact of non-zero initial
DNN output, a naive way is to set the initial parameters sufficiently small. However, because $\vtheta=\vzero$ is a high order saddle point of $\RS(\vtheta)$, too small initial parameters can lead to a nonlinear training dynamics that cannot be captured by NTK. In practice, small initial parameters can also results in a different initial kernel of DNN with a vanishing magnitude, which slows down the training and changes the learning result. Based on our above theoretical
results, we design an AntiSymmetrical Initialization trick (ASI) which
can fix the initial output to zero but also keep the kernel invariant.
Let $h_{i}^{[l]}$ be the output of the $i$-th node of the $l$-th
layer of a $L$ layer DNN. Then, $h_{i}^{[l]}(\vx)=\sigma_{i}^{[l]}(\mW_{i}^{[l]}\cdot \vh^{[l-1]}(\vx)+b_{i}^{[l]})$,
for $i=1,\cdots,m_{l}$. For the $i$-th neuron of the output layer
of DNN, $h_{i}^{[L]}(\vx)=\mW_{i}^{[L]}\cdot \vh^{[L-1]}+b_{i}^{[L]}.$
After initializing the DNN by any method, we obtain $h^{[L]}(\vx,\vtheta(0))$,
where 
\begin{equation*}
  \vtheta(0)=(\mW^{[L]}(0),\vb^{[L]}(0),\mW^{[L-1]}(0),\vb^{[L-1]}(0),\cdots,\vb^{[1]}(0)).
\end{equation*}
The ASI for general loss functions is to consider a new DNN expressed
as $h_{\mathrm{ASI}}(\vx,\vTheta(t))=\frac{\sqrt{2}}{2}h^{[L]}(\vx,\vtheta(t))-\frac{\sqrt{2}}{2}h^{[L]}(\vx,\vtheta'(t))$
where $\vTheta=(\vtheta,\vtheta')$, $\vTheta$ is initialized such that
$\vtheta'(0)=\vtheta(0)$. In the following, we prove a theorem that
ASI trick eliminates the nonzero random prior without changing the
kernel $k$ (Proof can be found in Appendix \ref{sec:AppendixASI}).
\begin{thm}
\label{thm:kernelinv} For any general loss function $\dist$ satisfying
the conditions in Sec. \ref{sec:ROF}, in the NTK regime, the gradient
flow of both $h(\vx,\vtheta(t))$ and $h_{\mathrm{ASI}}(\vx,\vTheta(t))$
follows the kernel dynamics
\begin{equation}
\partial_{t}h'=-\vk(\cdot,\mX)\nabla_{\vh(\mX,t)}\dist\left(\vh'(\mX,t),\vY\right),\label{eq:samedy}
\end{equation}
with initial value $h'(\cdot,0)=h_{\mathrm{ini}}=h(\vx,\vtheta(0))$
and $h'(\cdot,0)=0$, respectively, where $\{\mX;\vY\}$ is the training
set, $k(\vx,\vx')=k_{\vtheta_0}(\vx,\vx')=\nabla_{\vtheta}h(\vx,\vtheta_{0})^{\T}\nabla_{\vtheta}h(\vx',\vtheta_{0})$.
\end{thm}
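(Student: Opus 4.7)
The plan is to treat the two assertions in parallel. The statement for the ordinary linearized DNN $h(\vx,\vtheta(t))$ is essentially the NTK derivation already sketched in Section~\ref{subsec:Kernel-regime}: differentiating $h(\vx,\vtheta(t))$ in time, applying the chain rule, substituting the parameter flow \eqref{eq:gdh}, and collecting terms gives
\[
\partial_t h(\vx,\vtheta(t)) = -\nabla_{\vtheta}h(\vx,\vtheta_0)^{\T}\nabla_{\vtheta}\vh(\mX,\vtheta_0)\,\nabla_{\vh(\mX,t)}\dist\!\left(\vh(\mX,t),\vY\right),
\]
which is exactly \eqref{eq:samedy} with kernel $k_{\vtheta_0}$ and initial value $h_{\mathrm{ini}}=h(\vx,\vtheta(0))$. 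So the first half of the theorem is a restatement of the NTK computation.

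The core of the proof is therefore the ASI half. First I would verify the initial condition: since $\vtheta'(0)=\vtheta(0)$, plugging $t=0$ into $h_{\mathrm{ASI}}=\tfrac{\sqrt{2}}{2}h^{[L]}(\vx,\vtheta)-\tfrac{\sqrt{2}}{2}h^{[L]}(\vx,\vtheta')$ gives $h_{\mathrm{ASI}}(\cdot,\vTheta(0))=0$. Next I would compute the gradient of $h_{\mathrm{ASI}}$ with respect to $\vTheta=(\vtheta,\vtheta')$ at initialization, obtaining $\nabla_{\vtheta}h_{\mathrm{ASI}}=\tfrac{\sqrt{2}}{2}\nabla_{\vtheta}h^{[L]}(\cdot,\vtheta_0)$ and $\nabla_{\vtheta'}h_{\mathrm{ASI}}=-\tfrac{\sqrt{2}}{2}\nabla_{\vtheta}h^{[L]}(\cdot,\vtheta_0)$ (using $\vtheta'(0)=\vtheta(0)$). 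The NTK of the ASI model is then
\[
k_{\vTheta_0}(\vx,\vx') = \tfrac{1}{2}k_{\vtheta_0}(\vx,\vx') + \tfrac{1}{2}k_{\vtheta_0}(\vx,\vx') = k_{\vtheta_0}(\vx,\vx'),
\]
so the kernel is preserved exactly, which is the whole point of using the $\pm\sqrt{2}/2$ weighting.

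Now I would write the parameter gradient flow for $\vTheta$ under the linearized model. Because in the NTK regime the parameter gradients are frozen at their initial values, the flows on the two copies read
\[
\dot{\vtheta}(t) = -\tfrac{\sqrt{2}}{2}\nabla_{\vtheta}\vh^{[L]}(\mX,\vtheta_0)\,\vu(t),\qquad
\dot{\vtheta}'(t) = +\tfrac{\sqrt{2}}{2}\nabla_{\vtheta}\vh^{[L]}(\mX,\vtheta_0)\,\vu(t),
\]
where $\vu(t)=\nabla_{\vh(\mX,t)}\dist(\vh_{\mathrm{ASI}}(\mX,t),\vY)$. Differentiating $h_{\mathrm{ASI}}$ in time and substituting yields two contributions of magnitude $\tfrac{1}{2}k_{\vtheta_0}(\vx,\mX)\vu(t)$ each, which sum (the two minus signs from the antisymmetric coefficients cancel in the second term) to $-k_{\vtheta_0}(\vx,\mX)\vu(t)$. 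This is precisely \eqref{eq:samedy} with initial value $0$.

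The only real subtlety is bookkeeping the signs and factors of $\sqrt{2}/2$: the antisymmetry makes the \emph{values} at $\vtheta$ and $\vtheta'$ cancel at $t=0$, while the \emph{quadratic} form defining the NTK adds them, so the two effects compose to eliminate $h_{\mathrm{ini}}$ without touching the kernel. No further regularity or convergence analysis is needed, since we work throughout with the linearized dynamics \eqref{eq:linear} in which the NTK is constant along trajectories.
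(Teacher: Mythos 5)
Your proposal is correct and follows essentially the same route as the paper's proof: verify that $h_{\mathrm{ASI}}(\cdot,\vTheta(0))=0$ by antisymmetry of the values, and that $k_{\vTheta_0}=\tfrac{1}{2}k_{\vtheta_0}+\tfrac{1}{2}k_{\vtheta_0}=k_{\vtheta_0}$ by the quadratic form adding the two copies, then invoke the standard linearized-NTK derivation. Your version is if anything slightly more explicit than the paper's, since you write out the parameter flows on the two copies and check the sign cancellation directly rather than just asserting the equivalence.
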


Note that \citet{chizat2018note} proposes a ``doubling trick''
to offset the initial DNN output, that is, neurons in the last layer
are duplicated, with the new neurons having the same input weights
but opposite output weights. By applying the ``doubling trick'',
$h'(\cdot,0)=0$. However, the kernel of layers $L-1$ and $L$ doubles,
whereas the kernel of layers $m\leq L-2$ completely vanishes
(See Appendix \ref{sec:doubling-trickappendix} for the proof), which
could have large impact on the training dynamics as well as the generalization
performance of DNNs.

\section{Experiments}

Our above theoretical results are obtained using the linearized model
of DNN in Eq. \eqref{eq:linear} that well approximates the behavior
of DNN in the NTK regime. In this section, we will demonstrate
experimentally the accuracy of these results for very wide DNNs and
the effectiveness of these results for general DNNs. First, using
synthetic data, we verify the invariance of DNN output after training
to different loss functions as studied in Sec. \ref{sec:ROF}. Then,
we verify the linear relation in Eq. (\ref{eq:networkpredictini}).
Moreover, we demonstrate the effectiveness of the ASI trick on both
synthetic data and the MNIST dataset. Here is a summary of the settings
of DNNs in our experiments. The activation function is ReLU, parameters
are initialized by a Gaussian distribution with mean $0$ and standard
deviation $v_{{\rm std}}\sqrt{2/(m_{\text{in}}+m_{\text{out}})}$,
where $m_{{\rm in}}$ and $m_{{\rm out}}$ are for the input and the
output dimension of the neuron, respectively. For Figs. (\ref{fig:sin-relu-6-2},
\ref{fig:sin-relu-6}, \ref{fig:Boston}), networks are trained by
full gradient descent with MSE loss and the learning rate is $10^{-5}$.

\subsection{Invariance of DNN output to loss functions}

For a DNN $h(\vx,\vtheta(t))$ with initialization fixed at certain $\vtheta(0)=\vtheta_{0}$,
we consider its gradient descent training for two loss functions:
the $L^{2}$ (MSE) loss $\dist(\vh(\mX,\vtheta),\vY)=\frac{1}{n}\sum_{i=1}^{n}(h(\vx_{i},\vtheta)-y_{i})^{2}$
and the $L^{4}$ loss $\dist(\vh(\mX,\vtheta),\vY)=\frac{1}{n}\sum_{i=1}^{n}(h(\vx_{i},\vtheta)-y_{i})^{4}$.
In Fig. \ref{fig:sin-relu-6-2}, as Theorems \ref{thm:equivalence}
and \ref{thm:min kernel norm} predict, the well-trained DNN outputs
for these two losses overlap very well not only at $4$ training points,
but also at all the test points.
\begin{center}
\begin{figure}
\begin{centering}
\includegraphics[scale=0.6]{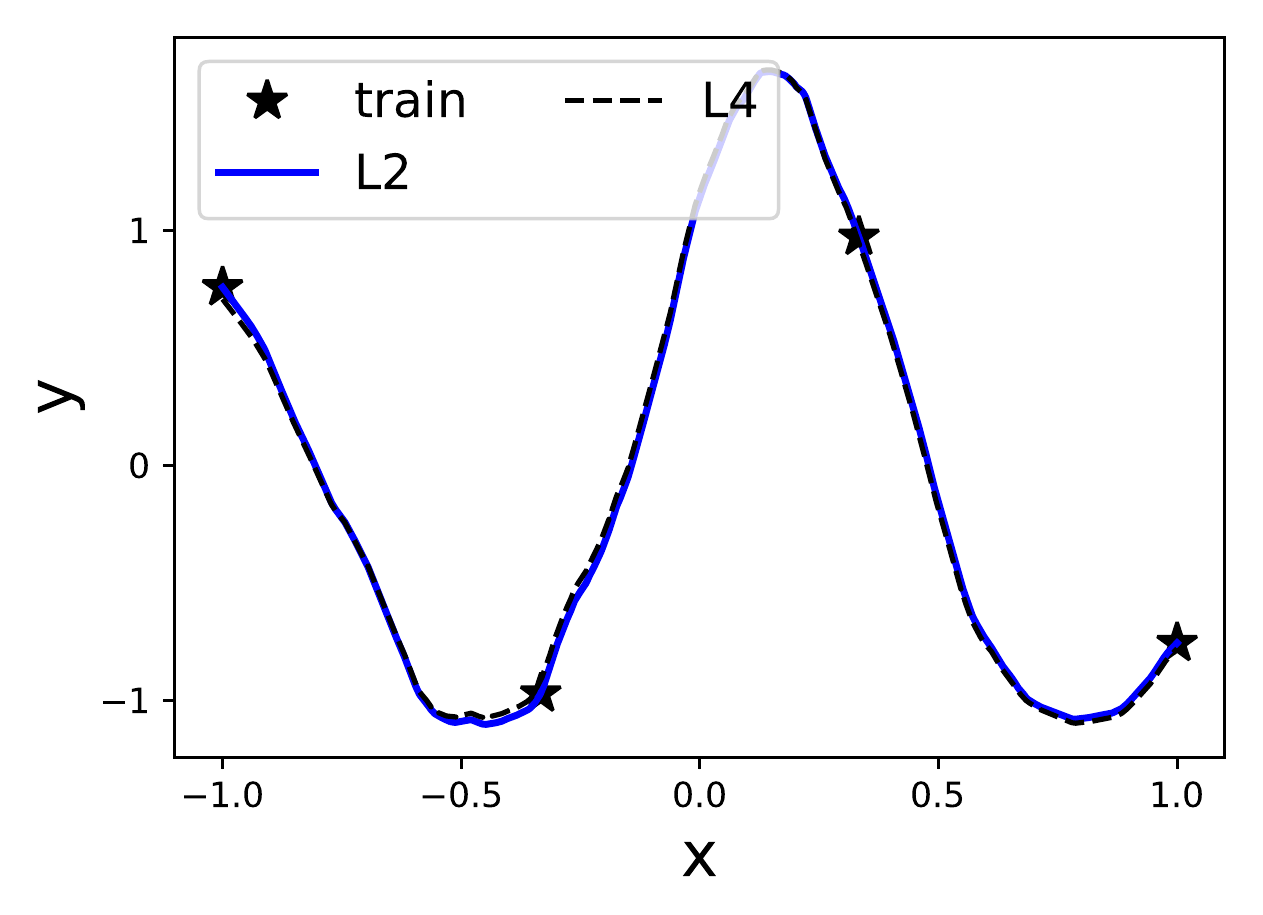} 
\par\end{centering}
\caption{Invariance of DNN output to loss functions. Black stars indicate training
data. Blue solid curve and the black curve indicate the outputs on
the test samples of the DNN well-trained by $L^{2}$ loss and $L^{4}$
loss, respectively. The size of DNN is 1-500-500-1. $v_{{\rm std}}=5$.
The training and test data are randomly sampled from $\sin(4x)$ in
$[-1,1]$ with sample size $4$ and $500$, respectively. \label{fig:sin-relu-6-2} }
\end{figure}
\par\end{center}

\subsection{Linear relation and the effectiveness of ASI trick}

\subsubsection{1-d synthetic data}

In this sub-section, we use 1-d data, which is convenient for visualization,
to train DNNs of a large width. As shown in Fig. \ref{fig:sin-relu-6}(a),
without applying any trick, the original DNN initialized with a large
weight learns/interpolates the training data in a fluctuating manner
(blue solid). Both the ASI trick (cyan dashed dot) and the ``doubling
trick'' (green dashed) enable the DNN to interpolate the training
data in a more ``flat'' way. As shown by the red dashed curve, the
output computed by the right hand side (RHS) of Eq. (\ref{eq:networkpredictini})
accurately predicts the final output of the original DNN on test points.
In our experiments $h_{k}(x;0,\mX,h_\mathrm{ini}(\mX))$ , $h_{k}(x;0,\mX,\vY)$,
$h_{k}(x;h_{\mathrm{ini}},\mX,\vY)$ are always obtained using very wide
DNNs with or without the ASI trick applied. From Eq. (\ref{eq:networkpredictini}),
a non-zero initialization adds a prior $h_\mathrm{ini}-h_{k}(x;0,\mX,h_\mathrm{ini}(\mX))$
to the final DNN output. As shown by the cyan dashed curve in Fig.
\ref{fig:sin-relu-6}(b), this prior fluctuates a lot, thus, leading
to an oscillatory output of DNN after training. Note that this experiment
also support the prediction of F-Principle \citep{xu_training_2018,xu2019frequency}
that it is the high frequencies of $h_{\mathrm{ini}}$ (red dashed)
that remains in the final output of DNN. Concerning the training speed,
as shown in Fig. \ref{fig:sin-relu-6}(c), the loss function of the
DNN with the ASI trick applied decreases much faster than that of
the original DNN or the one with the ``doubling trick'' applied.
For a reference, we double the original network similar to the ASI
trick, then initialize it randomly following the same distribution
as the original. We refer to this trick as RND. As shown by the black
curve in Fig. \ref{fig:sin-relu-6}(c), the loss function of the DNN
with the RND trick applied also decreases much slower than the one
with the ASI trick applied.

In summary, for 1-d problem, the linear relation holds well and
the ASI trick is effective in removing the artificial prior induced
by $h_{\mathrm{ini}}$ and accelerating the training speed. In the
following, we further investigate the generalization performance of
DNN with the ASI trick  for real datasets.


\begin{center}
\begin{figure}
\begin{centering}
\includegraphics[scale=0.78]{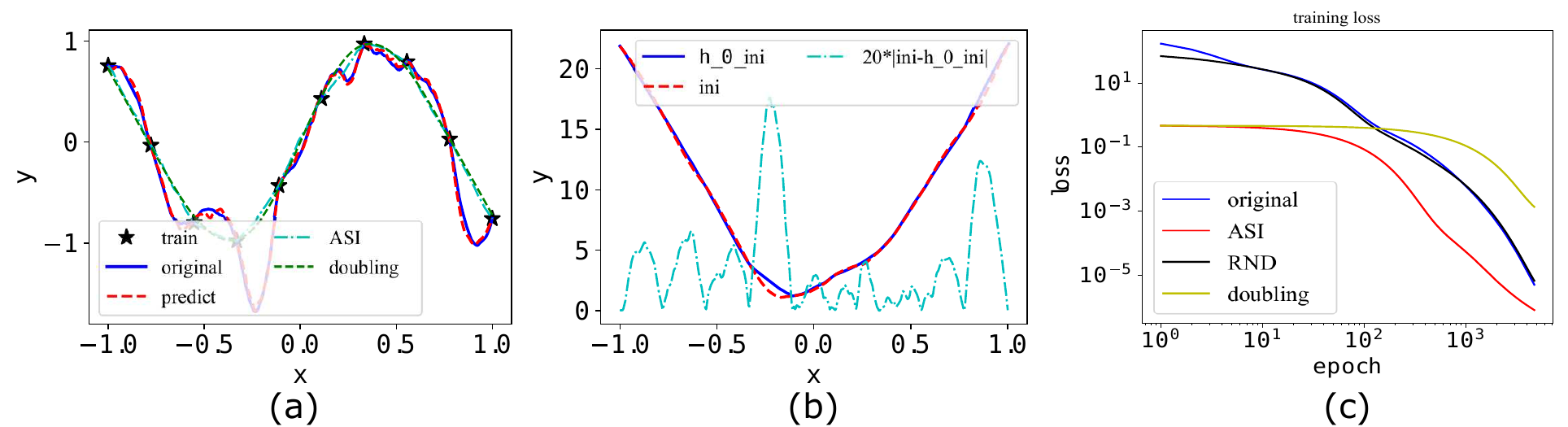} 
\par\end{centering}
\caption{Synthetic data. (a) Black stars indicate the training data. Other
curves indicate final outputs of different DNNs evaluated at the test
points. Blue solid: the original DNN (without tricks); cyan dashed
dot: DNN with ASI trick applied; green dashed: DNN with ``doubling
trick'' applied; red dashed: the RHS of Eq. (\ref{eq:networkpredictini}).
(b) Blue: $h_{k}(x;0,\mX,h_\mathrm{ini}(\mX))$; red: $h_\mathrm{ini}$;
cyan: $20|h_\mathrm{ini}-h_{k}(x;0,\mX,h_\mathrm{ini}(\mX))|$. (c) Evolution
of loss functions of different DNNs during the training. Blue: the
original DNN; red: DNN with ASI trick applied; black: DNN with RND
applied; yellow: DNN with the ``doubling trick'' applied. The width
of the original DNN is 1-5000-5000-1. $v_{{\rm std}}=10$. We sample training
and test data   randomly   from $\sin(4x)$ in $[-1,1]$ with
size $10$ and $500$, respectively. \label{fig:sin-relu-6} }
\end{figure}
\par\end{center}

\subsubsection{Boston house price dataset}

We verify our theoretical results for high dimensional regression
problems using Boston house price dataset \citep{harrison1978hedonic},
in which we normalize the value of each property and the price to
$[-0.5,0.5]$. We choose $400$ samples as the training data, and
the other $106$ samples as the test data. As illustrated by the red
dots concentrating near the black line of an identity relation in
Fig. \ref{fig:Boston}a, the RHS of Eq. (\ref{eq:networkpredictini})
well predicts the final output of the original DNN without any trick,
which is significant different from the final output of DNN with the
ASI trick applied as shown by the blue dots deviating from the black
line. As shown in \ref{fig:Boston}b, similar to the experiments on
1-d synthetic data, the ASI trick accelerates the training. In addition,
conforming with Theorem \ref{thm:iniextraerror}, the generalization
error of the DNN with ASI trick applied is much smaller than that
of the original DNN. 
\begin{center}
\begin{figure}
\begin{centering}
\includegraphics[scale=0.88]{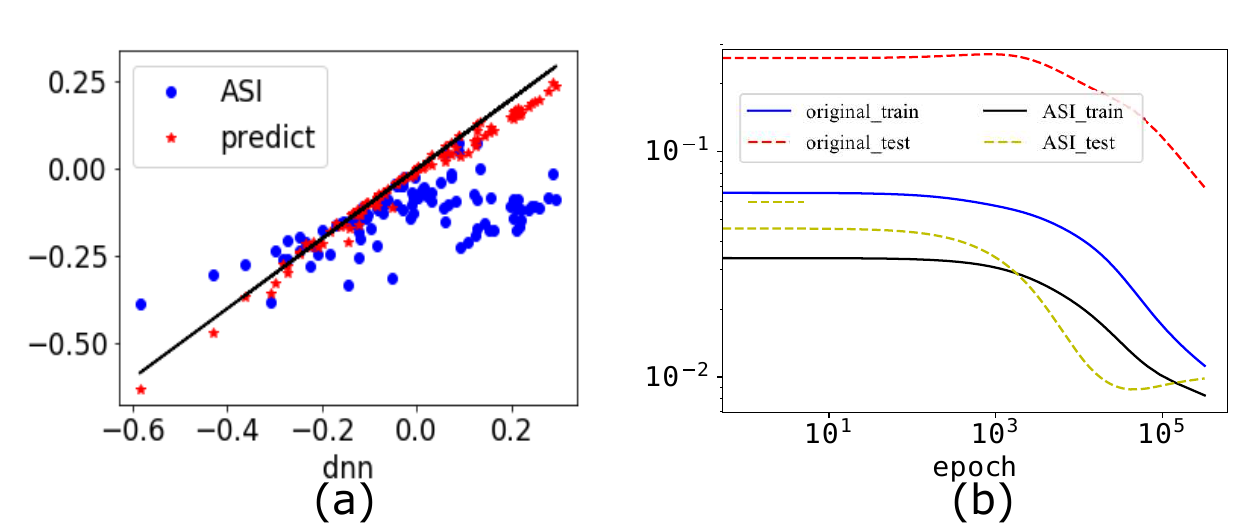} 
\par\end{centering}
\caption{Boston house price dataset. (a) Each dot represents outputs evaluated
at one test point. The abscissa is $h_{k}(\cdot;h_\mathrm{ini},\mX,\vY)$
obtained using the original DNN. The ordinate for each blue dot is
$h_{k}(x;0,\mX,\vY)$ obtained using DNN with the ASI trick applied, whereas
for each red dot is the RHS of Eq. (\ref{eq:networkpredictini}).
The black line indicates the identity function $y=x$. (b) The evolution
of training loss (blue solid) and test loss (red dashed) of the original
DNN, and the training loss (black solid) and test loss (yellow dashed)
of DNN with ASI trick applied. The width of DNN is 13-100000-1. $v_{{\rm std}}=5$.
\label{fig:Boston} }
\end{figure}
\par\end{center}

\subsubsection{MNIST dataset and the non-NTK regime of DNN}

Next, we use the MNIST dataset to examine the effectiveness of ASI
trick in the non-NTK regime of DNNs. We use a DNN with a more realistic
setting of width 784-400-400-400-400-10, cross-entropy loss, batch
size 512, and Adam optimizer \citep{kingma2014adam}. In such a case,
as shown in Fig. \ref{fig:mce}, the ASI trick still effectively eliminate
$h_{\mathrm{ini}}$, accelerate the training speed and improve the
generalization. In Fig. \ref{fig:mce}(b), with the ASI trick applied,
both training and test accuracy exceeds $90\%$ after only $1$ epoch
of training. This phenomenon further demonstrate that, without the
interference of $h_{\mathrm{ini}}$, DNNs can capture very efficiently
and accurately the behavior of the training data. 
\begin{center}
\begin{figure}
\begin{centering}
\includegraphics[scale=0.88]{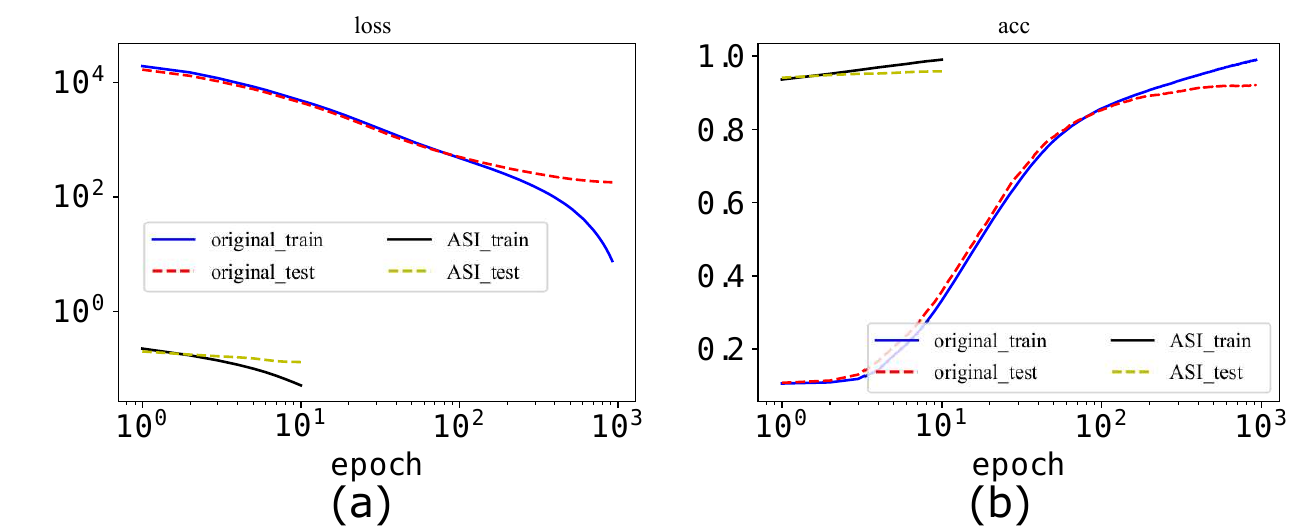} 
\par\end{centering}
\caption{Effectiveness of ASI trick for MNIST dataset in the non-NTK regime
of DNN. (a) Evolution of loss functions with the same legend as in
Fig. \ref{fig:Boston}(b). (b) Evolution of the corresponding accuracy.
The learning rate is $2\times10^{-7}$. See main text for other settings.\label{fig:mce} }
\end{figure}
\par\end{center}

\section{Discussion}

In this work, focusing on the regression problem, we propose using a kernel-norm
minimization framework to study theoretically the role of loss function
and initialization for DNNs in the NTK regime. We prove that, given
initialization, DNNs of different loss functions in a general class
find the same global minimum. Regarding initialization, we find that
a non-zero initial output of DNN leads to a specific type of generalization
error. We then propose the ASI trick to eliminate this error without
changing the neural tangent kernel. Experimentally, we find that ASI
trick significantly accelerates the training and improves the generalization
performance. Moreover, ASI trick remains effective for classification
problems as well as for DNNs in the non-NTK regime. Because the
error of DNN output induced by random initialization shrinks during
the training, the advantage of ASI trick is much more significant
at the early stage of the training. Based on above results, we suggest
incorporating ASI trick in the design of controlled experiments for
the quantitative study of DNNs. 

From the perspective of training flexibility, ASI trick can alleviate
the sensitivity of generalization and training speed to different
random initializations of DNNs, thus expand the range of well-generalized
initializations. This property could be especially helpful for finding
a well-generalized solution of a new problem when empirical guidance
is not available. We also remarks that, from Eq. (\ref{eq:networkpredictini}),
a particular prior of $h_\mathrm{ini}$, such as the one from meta
learning \citep{rabinowitz2019meta}, could decrease the generalization
error. However, when meta learning is not available, a zero $h_\mathrm{ini}$
is in general the best choice for generalization.

Cross-entropy loss is commonly used in classification problems, for
which the DNN outputs are often transformed by a softmax function
to stay in $(0,1)$. Theoretically, to obtain a zero cross-entropy
loss given that labels of the training data take $1$ or $0$, weights
of the DNN should approach infinity. In such a case, it is impossible
for a DNN to stay in the NTK regime, which requires a small variation
of weights throughout the training. However, in practice, training
of a DNN often stops by meeting certain criteria of training accuracy
or validation accuracy. Therefore, it is possible that weights of
a sufficiently wide DNN stay in a small neighborhood of the initialization
during the training. By setting a proper tolerance for the cross-entropy
loss, we will analyze in the future the behavior of DNNs in kernel
regime for classification problems with cross-entropy loss.

 \acks{Zhiqin Xu is supported by the Student Innovation Center at Shanghai Jiao Tong University. }

\bibliography{DLRef} 

\part*{Appendix}

\section{Theorems for the kernel-norm minimization framework \label{subsec:EquiTheorems}}
\begin{thm}
  \label{thm:equivalence}Let $\vtheta(t)$ be the solution of gradient
  flow dynamics
  \begin{equation}
    \frac{\D }{\D t}\vtheta(t)=-\nabla_{\vtheta}\vh(\mX,\vtheta_{0})\nabla_{\vh(\mX,\vtheta(t))}\dist\left(\vh(\mX,\vtheta(t)),\vY\right)\label{eq:gf-thm2}
  \end{equation}
  with initial value $\vtheta(0)=\vtheta_{0}$, where $\nabla_{\vtheta}\vh(\mX,\vtheta_{0})$
  is a full rank (rank $n$) matrix of size $m\times n$
  with $m>n$. Then $\vtheta(\infty)=\lim_{t\to\infty}\vtheta(t)$
  exists and uniquely solves the constrained optimization problem 
  \begin{equation}
    \min_{\vtheta}\norm{\vtheta-\vtheta_{0}}_{2},\ s.t.,\ \vh(\mX,\vtheta)=\vY.\label{eq:min norm}
  \end{equation}
\end{thm}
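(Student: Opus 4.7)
The plan is to exploit the fact that \eqref{eq:gf-thm2} is a gradient flow for a linearized model, which makes the dynamics effectively affine in parameter space. Writing $\vA := \nabla_{\vtheta}\vh(\mX,\vtheta_0)^\T \in \sR^{n\times m}$, the linearized constraint becomes $\vA(\vtheta-\vtheta_0) = \vY - \vh(\mX,\vtheta_0)$, and the flow reads $\dot{\vtheta}(t) = -\vA^\T \vu(t)$ where $\vu(t):=\nabla_{\vh(\mX,\vtheta(t))}\dist(\vh(\mX,\vtheta(t)),\vY)$. Since $\dot{\vtheta}(t) \in \mathrm{range}(\vA^\T)$ for all $t$ and $\vtheta(0)-\vtheta_0 = 0 \in \mathrm{range}(\vA^\T)$, the trajectory satisfies $\vtheta(t)-\vtheta_0 \in \mathrm{range}(\vA^\T)$ for every $t\geq 0$. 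This is the \emph{implicit bias} that will ultimately pick out the minimum-norm solution.

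First I would establish convergence of the output $\vz(t):=\vh(\mX,\vtheta(t))$ to $\vY$. Differentiating, $\dot{\vz} = \vA\dot{\vtheta} = -\vA\vA^\T \nabla_\vz \dist(\vz,\vY)$, and since $\vA$ has rank $n$, the Gram matrix $\vA\vA^\T$ is positive definite. Using $\dist$ itself as a Lyapunov function yields
\begin{equation*}
  \tfrac{\D}{\D t}\dist(\vz(t),\vY) = -\nabla_\vz\dist(\vz,\vY)^\T \vA\vA^\T \nabla_\vz\dist(\vz,\vY) \leq 0,
\end{equation*}
with equality iff $\nabla_\vz\dist(\vz,\vY)=0$, which by condition (iii) on $\dist$ forces $\vz=\vY$. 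Combined with the fact that $\dist(\vz,\vY)$ is bounded below by $0$ (conditions (i), (ii)) and attains its minimum uniquely at $\vY$, a LaSalle-type argument gives $\vz(t)\to \vY$ as $t\to\infty$.

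Next I would upgrade convergence from $\vz$ to $\vtheta$. Because $\vA$ restricted to $\mathrm{range}(\vA^\T)$ is a linear bijection onto $\sR^n$ (equivalent to invertibility of $\vA\vA^\T$), the map $\vtheta-\vtheta_0 \mapsto \vA(\vtheta-\vtheta_0)$ is a homeomorphism on that subspace. Since $\vtheta(t)-\vtheta_0$ lies in $\mathrm{range}(\vA^\T)$ and its image under $\vA$ converges to $\vY - \vh(\mX,\vtheta_0)$, it follows that $\vtheta(t)$ itself converges to a unique limit $\vtheta(\infty) \in \vtheta_0 + \mathrm{range}(\vA^\T)$ with $\vA(\vtheta(\infty)-\vtheta_0) = \vY-\vh(\mX,\vtheta_0)$, i.e.\ $\vh(\mX,\vtheta(\infty)) = \vY$.

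Finally I would identify $\vtheta(\infty)$ with the solution of \eqref{eq:min norm}. The feasible set is the affine space $\vtheta_0 + \vtheta^\ast + \ker(\vA)$ for any feasible $\vtheta_0+\vtheta^\ast$, and minimizing $\norm{\vtheta-\vtheta_0}_2$ over this affine space has a unique minimizer, namely the element of $\mathrm{range}(\vA^\T) = \ker(\vA)^\perp$ that satisfies the constraint. Since $\vtheta(\infty)-\vtheta_0$ is exactly such an element, it coincides with this unique minimizer. The main technical obstacle is the rigorous LaSalle argument for convergence of the trajectory (not just the loss) to the equilibrium set when $\dist$ is only required to satisfy conditions (i)--(iii) rather than being strongly convex; a clean way around this is to note that the affine reduction to $\vz$-space yields a gradient flow on $\sR^n$ for the strictly-minimum function $\dist(\cdot,\vY)$ preconditioned by the positive definite matrix $\vA\vA^\T$, after which standard ODE/Lyapunov theory suffices.
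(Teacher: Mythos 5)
Your proposal follows essentially the same route as the paper: confine the trajectory to $\mathrm{range}(\nabla_{\vtheta}\vh(\mX,\vtheta_0))$ (the paper phrases this via the SVD as $\mV_{2}^{\T}(\vtheta(t)-\vtheta_{0})=\mzero$), show the outputs converge to $\vY$, and then identify the limit as the unique feasible point orthogonal to $\ker(\vA)$. The only divergence is the convergence step, where the paper integrates $\abs{\D\vtheta/\D t}^{2}=-\tfrac{\D}{\D t}\RS(\vtheta(t))$ to argue $\D\vtheta/\D t\to\vzero$ while you run a Lyapunov/LaSalle argument on the output ODE; both arguments are comparably (in)complete under conditions (i)--(iii) alone, so this is a cosmetic rather than substantive difference.
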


\begin{rmk*}
Compared with the nonlinear gradient flow of DNN, the linearization
in Eq. \eqref{eq:gf-thm2} is only performed on the hypothesis function
$h$ but not on the loss function or the gradient flow.
\end{rmk*}
\begin{proof}
  Gradient flow Eq. \eqref{eq:gf-thm2} can be written as 
  \begin{equation*}
    \frac{\D\vtheta(t)}{\D t}=-\nabla_{\vtheta}\dist\left(\vh(\mX,\vtheta(t)),\vY\right).
  \end{equation*}
  Then denote $\RS(\vtheta(t))=\dist\left(\vh(\mX,\vtheta(t)),\vY\right)$,
  \begin{equation*}
    \Abs{\frac{\D \vtheta}{\D t}}^{2}=-\frac{\D }{\D t}\RS(\vtheta(t)).
  \end{equation*}
  Note that $\RS(\vtheta(t))=\dist\left(\vh(\mX,\vtheta(t)),\vY\right)\geq0$ for any $t\geq0$. Thus
  \begin{align*}
    \int_{0}^{\infty}\Abs{\frac{\D \vtheta}{\D t}}^{2}\diff{t}& =\RS(\vtheta(0))-\RS(\vtheta(\infty)) \leq \RS(\vtheta(0)).
  \end{align*}
  Since $\frac{\D \vtheta}{\D t}$ is continuous, 
  \begin{equation*}
    \lim_{t\to\infty}\frac{\D }{\D t}\vtheta(t)=\lim_{t\to\infty}-\nabla_{\vtheta}\vh(\mX,\vtheta_{0})\nabla_{\vh(\mX,\vtheta(t))}\dist\left(\vh(\mX,\vtheta(t)),\vY\right)=\vec{0}.
  \end{equation*}
  Because $\nabla_{\vtheta}\vh(\mX,\vtheta_{0})$ is a full rank matrix,
  \begin{equation*}
    \lim_{t\to\infty}\nabla_{\vh(\mX,\vtheta(t))}\dist\left(\vh(\mX,\vtheta(t)),\vY\right)=\vec{0}.
  \end{equation*}
  Recall that $\nabla_{\vz'}\dist(\vz',\vz)=\vec{0}$ if and only if $\vz'=\vz$. Thus
  \begin{equation*}
    \lim_{t\to\infty}\vh(\mX,\vtheta(t))=\vY.
  \end{equation*}
  By applying singular value decomposition to $\nabla_{\vtheta}\vh(\mX,\vtheta_{0})$,
  we obtain $\nabla_{\vtheta}\vh(\mX,\vtheta_{0})=\mV\mSigma \mU^{\T}$, where
  $\mV$ and $\mU$ are orthonormal matrix of size $m\times m$
  and $n\times n$ respectively, $\mSigma=\left[\begin{array}{c}
  \mSigma_{1}\\
  \mzero
  \end{array}\right]$ of size $m\times n$, where $\mSigma_{1}$ is a full
  rank diagonal matrix of size $n\times n$. $\mV$ can be split into
  two part as $\mV=[\mV_{1},\mV_{2}]$, where $\mV_{1}$ takes the first $n$
  columns and $\mV_{2}$ takes the last $m-n$ columns of
  $\mV$. Then 
  \begin{align*}
    \nabla_{\vtheta}\vh(\mX,\vtheta_{0})=\mV\mSigma \mU^{\T}
     & =[\mV_{1},\mV_{2}]\left[\begin{array}{c}
    \mSigma_{1}\\
    \mzero
    \end{array}\right]\mU^{\T} =\mV_{1}\mSigma_{1}\mU^{\T},\\
    \mV_{2}^{\T}\nabla_{\vtheta}\vh(\mX,\vtheta_{0}) & =\mV_{2}^{\T}\mV_{1}\mSigma_{1}\mU^{\T} =\mzero.
  \end{align*}
   Therefore 
  \begin{equation*}
    \frac{\D }{\D t}\mV_{2}^{\T}\vtheta(t)=-\mV_{2}^{\T}\nabla_{\vtheta}\vh(\mX,\vtheta_{0})^{\T}\nabla_{\vh(\mX,\vtheta(t))}\dist\left(\vh(\mX,\vtheta(t)),\vY\right)=\mzero,
  \end{equation*}
  which leads to 
  \begin{equation}
    \mV_{2}^{\T}\left(\vtheta(t)-\vtheta_{0}\right)=\mzero,\ {\rm for\ any}\ t\geq0.\label{eq:v2}
  \end{equation}
  By Eq. \eqref{eq:linear}, $\lim_{t\to\infty}\vh(\mX,\vtheta(t))=\vY$ yields
  \begin{equation*}
    \lim_{t\to\infty}\nabla_{\vtheta}\vh(\mX,\vtheta_{0})^{\T}\left(\vtheta(t)-\vtheta_{0}\right)=\vY-\vh(\mX,\vtheta_{0}),
  \end{equation*}
  which can be written as
  \begin{equation*}
    \lim_{t\to\infty}\mU\mSigma_{1}\mV_{1}^{\T}\left(\vtheta(t)-\vtheta_{0}\right)=\vY-\vh(\mX,\vtheta_{0})
  \end{equation*}
  hence
  \begin{equation}
    \lim_{t\to\infty}\mV_{1}^{\T}\left(\vtheta(t)-\vtheta_{0}\right)=\mSigma_{1}^{-1}\mU^{\T}\left[\vY-\vh(\mX,\vtheta_{0})\right].\label{eq:v1}
  \end{equation}
  Combining Eq. \eqref{eq:v2} and \eqref{eq:v1}, $\vtheta(\infty)=\lim_{t\to\infty}\vtheta(t)$
  exists and is uniquely determined as
  \begin{align*}
    \mV^{\T}\left(\vtheta(\infty)-\vtheta_{0}\right) & =\left[\begin{array}{c}
    \mV_{1}^{\T}\\
    \mV_{2}^{\T}
    \end{array}\right]\left(\vtheta(\infty)-\vtheta_{0}\right)\\
     & =\left[\begin{array}{c}
    \mSigma_{1}^{-1}\mU^{\T}\left[\vY-\vh(\mX,\vtheta_{0})\right]\\
    \mzero
    \end{array}\right],
  \end{align*}
  \begin{equation*}
    \vtheta(\infty)-\vtheta_{0}=\mV\left[\begin{array}{c}
    \mSigma_{1}^{-1}\mU^{\T}\left[\vY-\vh(\mX,\vtheta_{0})\right]\\
    \mzero
    \end{array}\right]=\mV_{1}\mSigma_{1}^{-1}\mU^{\T}\left[\vY-\vh(\mX,\vtheta_{0})\right],
  \end{equation*}
  which leads to 
  \begin{equation*}
    \vtheta(\infty)=\mV_{1}\mSigma_{1}^{-1}\mU^{\T}\left[\vY-\vh(\mX,\vtheta_{0})\right]+\vtheta_{0}.
  \end{equation*}
  On the other hand, by the above analysis, problem \eqref{eq:min norm}
  can be formulated as 
  \begin{equation*}
    \min_{\vtheta}\norm{\vtheta-\vtheta_{0}}_{2}\quad \mathrm{s.t.}\quad \mV_{1}^{\T}\left(\vtheta-\vtheta_{0}\right)=\mSigma_{1}^{-1}\mU^{\T}\left[\vY-\vh(\mX,\vtheta_{0})\right].
  \end{equation*}
  Any $\vtheta$ satisfies above constraint can be expressed as 
  \begin{equation*}
    \vtheta=\mV_{1}\mSigma_{1}^{-1}\mU^{\T}\left[\vY-\vh(\mX,\vtheta_{0})\right]+\mV_{2}\vxi+\vtheta_{0},
  \end{equation*}
  where $\vxi\in\sR^{m-n}$. Then 
  \begin{equation*}
    \norm{\vtheta-\vtheta_{0}}_{2}^{2}=\norm{\mV_{1}\mSigma_{1}^{-1}\mU^{\T}\left[\vY-\vh(\mX,\vtheta_{0})\right]}_{2}^{2}+\norm{\mV_{2}\vxi}_{2}^{2}.
  \end{equation*}
  Clearly, $\norm{\vtheta-\vtheta_{0}}_{2}$ attains minimum
  if and only if $\vxi=\mzero$. Therefore $\vtheta(\infty)=\mV_{1}\mSigma_{1}^{-1}\mU^{\T}\left[\vY-\vh(\mX,\vtheta_{0})\right]+\vtheta_{0}$
  uniquely solves problem \eqref{eq:min norm}.
\end{proof}
For the proof of Theorem \ref{thm:min kernel norm}, we first introduce
the following two lemmas.
\begin{lem}
  \label{lem:htotheta}
  For any $h'\in H_{k}(\Omega)$, there
  exist $\vtheta'=\left\langle h'(\cdot),\nabla_{\vtheta}h(\cdot,\vtheta_{0})\right\rangle _{k}$
  such that $h'=\nabla_{\vtheta}h(\vx,\vtheta_{0})^{\T}\vtheta'$.
\end{lem}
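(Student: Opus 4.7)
The plan is to verify the claimed identity pointwise using the reproducing property of $H_k(\Omega)$. The kernel here is explicitly
\begin{equation*}
k(\vx,\vx') = \sum_{j=1}^{m}\partial_{\theta_{j}}h(\vx,\vtheta_{0})\,\partial_{\theta_{j}}h(\vx',\vtheta_{0}),
\end{equation*}
so $T_k$ has rank at most $m$, only finitely many Mercer eigenvalues $\sigma_{i}$ are nonzero, and $H_k(\Omega)$ is finite-dimensional.

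First I would show that each coordinate function $\partial_{\theta_{j}}h(\cdot,\vtheta_{0})$ belongs to $H_k(\Omega)$, so that the componentwise inner products defining $\vtheta'$ make sense. Identifying $T_k$ with $JJ^{*}$, where $J:\sR^{m}\to L^{2}(\Omega)$ is the feature map $J\vw=\nabla_{\vtheta}h(\cdot,\vtheta_{0})^{\T}\vw$ and $J^{*}$ is its $L^{2}$-adjoint, one sees that the span of the positive-eigenvalue eigenfunctions of $T_k$ equals the range of $J$, namely $\mathrm{span}\{\partial_{\theta_{j}}h(\cdot,\vtheta_{0})\}_{j=1}^{m}$. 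Under the Mercer-based definition in the preliminaries, any $g\in H_k(\Omega)$ must have $\langle g,\phi_{i}\rangle_{L^{2}}=0$ whenever $\sigma_{i}=0$ (otherwise the defining series diverges) and $\sum_{i:\sigma_{i}>0}\sigma_{i}^{-1}\langle g,\phi_{i}\rangle_{L^{2}}^{2}<\infty$ automatically, since the index set is finite. Hence $H_k(\Omega)=\mathrm{span}\{\partial_{\theta_{j}}h(\cdot,\vtheta_{0})\}_{j=1}^{m}$, and each coordinate function lies in $H_k(\Omega)$.

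Given this, the identity follows from a direct application of the reproducing property. For arbitrary $\vx\in\Omega$, using linearity of the RKHS inner product across the finite sum,
\begin{align*}
\nabla_{\vtheta}h(\vx,\vtheta_{0})^{\T}\vtheta'
&=\sum_{j=1}^{m}\partial_{\theta_{j}}h(\vx,\vtheta_{0})\,\langle h'(\cdot),\partial_{\theta_{j}}h(\cdot,\vtheta_{0})\rangle_{k}\\
&=\Big\langle h'(\cdot),\;\sum_{j=1}^{m}\partial_{\theta_{j}}h(\vx,\vtheta_{0})\,\partial_{\theta_{j}}h(\cdot,\vtheta_{0})\Big\rangle_{k}\\
&=\langle h'(\cdot),k(\cdot,\vx)\rangle_{k}=h'(\vx),
\end{align*}
which yields the desired equality $h'=\nabla_{\vtheta}h(\cdot,\vtheta_{0})^{\T}\vtheta'$ as functions on $\Omega$.

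The main obstacle is the membership step, i.e., rigorously justifying $\partial_{\theta_{j}}h(\cdot,\vtheta_{0})\in H_k(\Omega)$ under the Mercer-based definition, which is the only place where functional-analytic care is needed. An alternative route that avoids any Mercer bookkeeping is to pick $\vx_{1},\ldots,\vx_{r}\in\Omega$ such that the matrix $(\partial_{\theta_{j}}h(\vx_{i},\vtheta_{0}))_{i,j}$ has column rank equal to $\mathrm{rank}(\nabla_\vtheta h(\cdot,\vtheta_0))$; inverting on that column space expresses each $\partial_{\theta_{j}}h(\cdot,\vtheta_{0})$ as a finite linear combination of the kernel sections $k(\cdot,\vx_{i})$, which lie in $H_k(\Omega)$ by property (i) of the RKHS. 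Once membership is established, everything else reduces to the single line of computation above.
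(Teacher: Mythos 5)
Your proof is correct and follows essentially the same route as the paper's: expand $k(\cdot,\vx)$ through the feature map, use linearity of $\left\langle \cdot,\cdot\right\rangle _{k}$ over the finite sum to identify $\left\langle h',k(\cdot,\vx)\right\rangle _{k}$ with $\nabla_{\vtheta}h(\vx,\vtheta_{0})^{\T}\vtheta'$, and invoke the reproducing property. The only difference is that you also justify $\partial_{\theta_{j}}h(\cdot,\vtheta_{0})\in H_{k}(\Omega)$ so that $\vtheta'$ is well defined---a point the paper's proof takes for granted.
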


\begin{proof}
  For any $h'\in H_{k}(\Omega)$, 
  \begin{align*}
    \left\langle h'(\cdot),k(\cdot,\vz)\right\rangle _{k} 
    &= \left\langle h'(\cdot),\nabla_{\vtheta}h(\cdot,\vtheta_{0})^{\T}\nabla_{\vtheta}h(\vz,\vtheta_{0})\right\rangle _{k}\\
    &= \left\langle h'(\cdot),\nabla_{\vtheta}h(\cdot,\vtheta_{0})\right\rangle _{k}^{\T}\nabla_{\vtheta}h(\vz,\vtheta_{0}).
  \end{align*}
  For $\vtheta'=\left\langle h'(\cdot),\nabla_{\vtheta}h(\cdot,\vtheta_{0})\right\rangle _{k}$,
  by the property of reproducing kernel $k$,
  \begin{equation*}
    h'(\vx)=\left\langle h'(\cdot),k(\cdot,\vx)\right\rangle_{k}=\nabla_{\vtheta}h(\vx,\vtheta_{0})^{\T}\vtheta'.
  \end{equation*}
\end{proof}
\begin{lem}
    \label{lem:thetatoh} For any $\vtheta'\in\sR^{m}$,
    $\nabla_{\vtheta}h(\cdot,\vtheta_{0})^{\T}\vtheta'\in H_{k}(\Omega)$. 
\end{lem}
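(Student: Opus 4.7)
The goal is to show that $g := \nabla_{\vtheta}h(\cdot,\vtheta_{0})^{\T}\vtheta'$ lies in $L^2(\Omega)$ and satisfies the summability condition $\sum_{i}\sigma_i^{-1}\langle g,\phi_i\rangle_{L^2(\Omega)}^2<\infty$ required by the definition of $H_k(\Omega)$. The plan is first to treat $L^2$-membership as a warmup via Cauchy--Schwarz, then to push the argument through Mercer's theorem, the finite feature map $\vx\mapsto\nabla_{\vtheta}h(\vx,\vtheta_0)\in\sR^{m}$, and Bessel's inequality.

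For the $L^2$ step, I would bound pointwise $|g(\vx)|\le\norm{\vtheta'}_2\,\norm{\nabla_{\vtheta}h(\vx,\vtheta_0)}_2 =\norm{\vtheta'}_2\sqrt{k(\vx,\vx)}$ by Cauchy--Schwarz in $\sR^{m}$, and then integrate, using the standing assumption $\nabla_{\vtheta}h(\cdot,\vtheta)\in L^{2}(\Omega;\sR^{M})$ stated in Section~\ref{subsec:Kernel-regime}, to obtain $\norm{g}_{L^2(\Omega)}^2\le \norm{\vtheta'}_2^2\int_{\Omega}k(\vx,\vx)\D\vx<\infty$. Next I would expand each coordinate of the feature map in the orthonormal basis $\{\phi_j\}$ supplied by Mercer's theorem, writing $\partial_{\theta_k}h(\cdot,\vtheta_0)=\sum_{j}a_{k,j}\phi_j$ in $L^2$ with $a_{k,j}:=\langle\partial_{\theta_k}h(\cdot,\vtheta_0),\phi_j\rangle_{L^2(\Omega)}$. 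Linearity of the $L^2$ pairing immediately gives $\langle g,\phi_i\rangle_{L^2(\Omega)}=\sum_{k=1}^{m}\theta'_k a_{k,i}$.

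The crucial algebraic identity is a ``Parseval relation'' for the feature map: integrating both Mercer's expansion $k(\vx,\vx')=\sum_{j}\sigma_j\phi_j(\vx)\phi_j(\vx')$ and the defining expression $k(\vx,\vx')=\sum_{k=1}^{m}\partial_{\theta_k}h(\vx,\vtheta_0)\,\partial_{\theta_k}h(\vx',\vtheta_0)$ against $\phi_i(\vx)\phi_j(\vx')$ on $\Omega\times\Omega$ yields
\begin{equation*}
\sum_{k=1}^{m}a_{k,i}a_{k,j}=\sigma_i\delta_{ij}.
\end{equation*}
In particular, the vectors $\vb_i:=\va_i/\sqrt{\sigma_i}\in\sR^{m}$ (with $\va_i=(a_{1,i},\dots,a_{m,i})^{\T}$) form an orthonormal family in $\sR^{m}$ over the indices $i$ with $\sigma_i>0$, and $\va_i=\vzero$ whenever $\sigma_i=0$. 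Thus for every $i$ with $\sigma_i>0$,
\begin{equation*}
\sigma_i^{-1}\langle g,\phi_i\rangle_{L^2(\Omega)}^2
=\sigma_i^{-1}(\vtheta'^{\T}\va_i)^2
=(\vtheta'^{\T}\vb_i)^2,
\end{equation*}
while the remaining indices contribute zero because $\langle g,\phi_i\rangle_{L^2(\Omega)}=\vtheta'^{\T}\va_i=0$. Summing and applying Bessel's inequality to $\vtheta'$ against the orthonormal family $\{\vb_i\}$ gives $\sum_{i}\sigma_i^{-1}\langle g,\phi_i\rangle_{L^2(\Omega)}^2\le\norm{\vtheta'}_2^{2}<\infty$, which places $g$ in $H_k(\Omega)$.

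I expect the main obstacle to be the careful handling of the zero eigenvalues of $T_k$ (so that ``$\sigma_i^{-1}$'' is only invoked where $\sigma_i>0$) together with the justification that termwise integration against $\phi_i(\vx)\phi_j(\vx')$ is legitimate for the Mercer expansion and for the finite feature-map sum. The latter is standard since the Mercer series converges absolutely and uniformly and the feature-map sum is finite in $k$; the former is dispatched by observing that $\sigma_i=0$ forces $\va_i=\vzero$ through the Parseval relation, and hence both sides of the summability condition collapse to zero on those indices.
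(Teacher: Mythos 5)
Your proof is correct, and it takes a genuinely different route from the paper's. The paper argues by contradiction: assuming $\nabla_{\vtheta}h(\cdot,\vtheta_0)^{\T}\vtheta'\notin H_k(\Omega)$, it isolates an eigenfunction $\phi_{j_1}$ with $\sigma_{j_1}=0$ but $\langle\nabla_{\vtheta}h(\cdot,\vtheta_0)^{\T}\vtheta',\phi_{j_1}\rangle_{L^2(\Omega)}\neq 0$, and then evaluates $\int\phi_{j_1}(\vx)k(\vx,\vx')\phi_{j_1}(\vx')\diff\vx\diff\vx'$ two ways (via the feature map and via Mercer) to reach $0<0$. Your argument is the direct, "polarized" version of that same computation: integrating both representations of $k$ against $\phi_i(\vx)\phi_j(\vx')$ gives the full relation $\sum_k a_{k,i}a_{k,j}=\sigma_i\delta_{ij}$ (equivalently $\langle T_k\phi_i,\phi_j\rangle_{L^2(\Omega)}=\sigma_i\delta_{ij}$, which one could also read off directly from the eigenfunction property without invoking the series at all), from which $\va_i=\vzero$ whenever $\sigma_i=0$ and the vectors $\vb_i=\va_i/\sqrt{\sigma_i}$ are orthonormal in $\sR^{m}$ otherwise. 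What this buys you beyond the paper's proof is (a) a clean quantitative conclusion, $\norm{\nabla_{\vtheta}h(\cdot,\vtheta_0)^{\T}\vtheta'}_{k}\leq\norm{\vtheta'}_2$ via Bessel, which dovetails with the identity $\norm{h-h_{\mathrm{ini}}}_k=\norm{\vtheta'}_2$ used later in the proof of Theorem \ref{thm:min kernel norm}; and (b) a uniform treatment of the positive and zero eigenvalues, whereas the paper's split on $\sigma_{\min}=\min_j\{\sigma_j\}$ is slightly awkward (the first case is vacuous for a rank-$\leq m$ kernel on an infinite-dimensional $L^2(\Omega)$, and the second tacitly uses that the positive-eigenvalue part of the sum is a finite sum). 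Your stated justifications for the term-by-term integration (uniform convergence of the Mercer series on the compact domain, finiteness of the feature-map sum) are exactly what is needed.
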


\begin{proof}
  By the Mercer's theorem, 
  \begin{equation*}
    k(\vx,\vx')=\sum_{j=1}^{\infty}\sigma_{j}\phi_{j}(\vx)\phi_{j}(\vx'),
  \end{equation*}
  where $\{\phi_{j}\}_{j=1}^{\infty}$ are orthonormal basis of $L^{2}(\Omega)$.
  Suppose that $\nabla_{\vtheta}h(\cdot,\vtheta_{0})^{\T}\vtheta'\notin H_{k}(\Omega)$.
  Let $\sigma_\mathrm{min}=\min_{j}\{\sigma_j\}$. If $\sigma_\mathrm{min}>0$, then 
  \begin{align*}
      \sum_{i=1}^\infty \sigma_i^{-1}\langle \nabla_{\vtheta}h(\cdot,\vtheta_{0})^{\T}\vtheta',\phi_i\rangle^2_{L^2(\Omega)}
      & \leq
      \sigma_\mathrm{\min}^{-1}\sum_{i=1}^\infty \langle \nabla_{\vtheta}h(\cdot,\vtheta_{0})^{\T}\vtheta',\phi_i\rangle^2_{L^2(\Omega)}\\
      & \leq
      \sigma_\mathrm{\min}^{-1}\norm{\nabla_{\vtheta}h(\cdot,\vtheta_{0})^{\T}\vtheta'}^2_{L^2(\Omega)}
      <\infty.
  \end{align*}
  If $\sigma_\mathrm{min}=0$,
  then there exists $j_{1}$ such that $\sigma_{j_{1}}=0$ and $\left\langle \nabla_{\vtheta}h(\cdot,\vtheta_{0})^{\T}\vtheta',\phi_{j_{1}}\right\rangle _{L^{2}(\Omega)}\neq0$.
  Then there exists $j_{2}$ such that $\left\langle \nabla_{\theta_{j_2}}h(\cdot,\vtheta_{0}),\phi_{j_{1}}\right\rangle _{L^{2}(\Omega)}\neq0$,
  where $\theta_{j_2}$ is the $j_2$-th component of $\vtheta$.
  Then 
  \begin{align*}
    \int_{\Omega}\phi_{j_{1}}(\vx)k(\vx,\vx')\phi_{j_{1}}(\vx')\diff{\vx}\diff{\vx'} & =\int_{\Omega}\phi_{j_{1}}(\vx)\left(\nabla_{\vtheta}h(\vx,\vtheta_{0})^{\T}\nabla_{\vtheta}h(\vx',\vtheta_{0})\right)\phi_{j_{1}}(\vx')\diff{\vx}\diff{\vx'}\\
    &= \sum_{i}\left\langle \partial_{\theta_{i}}h(\cdot,\vtheta_{0}),\phi_{j_{1}}\right\rangle _{L^{2}(\Omega)}^{2}\\
    &\geq \left\langle \partial_{\theta_{j_2}}h(\cdot,\vtheta_{0}),\phi_{j_{1}}\right\rangle _{L^{2}(\Omega)}^{2}>0.
  \end{align*}
  However, on the other hand, 
  \begin{align*}
    \int_{\Omega}\phi_{j_{1}}(\vx)k(\vx,\vx')\phi_{j_{1}}(\vx')\diff{\vx}\diff{\vx'} & =\int_{\Omega}\phi_{j_{1}}(\vx)\sum_{j=1}^{\infty}\sigma_{j}\phi_{j}(\vx)\phi_{j}(\vx')\phi_{j_{1}}(\vx')\diff{\vx}\diff{\vx'}\\
    &= \sum_{j}\sigma_{j}\left\langle \phi_{j},\phi_{j_{1}}\right\rangle _{L^{2}(\Omega)}^{2}= \sigma_{j_{1}}=0,
  \end{align*}
  which leads to an contradiction. Therefore, $\nabla_{\vtheta}h(\cdot,\vtheta_{0})^{\T}\vtheta'\in H_{k}(\Omega)$.
\end{proof}
\begin{thm}
  \label{thm:min kernel norm}Let $\vtheta$ be the solution of problem
  \eqref{eq:min norm}, then $h(\vx,\vtheta)$ uniquely solves the optimization
  problem 
  \begin{equation}
    \min_{h-h_\mathrm{ini}\in H_{k}(\Omega)}\norm{h-h_\mathrm{ini}}_{k}\quad \mathrm{s.t.}\quad \vh(\mX)=\vY,\label{eq:minKnorm}
  \end{equation}
  where $h_\mathrm{ini}=h(\vx,\vtheta_{0})$ and the constraints $\vh(\mX)=\vY$
  are in the sense of trace \citep{evans2010partial} if the Hilbert space $H_k(\Omega)$ is good enough for the existence of trace of $\vh$ at $\mX$.
\end{thm}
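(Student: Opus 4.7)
The plan is to chain Theorem~\ref{thm:equivalence} with a classical representer-theorem style orthogonal decomposition in $H_k(\Omega)$. First I would verify admissibility: given $\vtheta$ solving \eqref{eq:min norm}, the linearization \eqref{eq:linear} yields $h(\cdot,\vtheta) - h_\mathrm{ini} = \nabla_{\vtheta}h(\cdot,\vtheta_0)^{\T}(\vtheta-\vtheta_0)$, which lies in $H_k(\Omega)$ by Lemma~\ref{lem:thetatoh}. The training constraint $\vh(\mX,\vtheta)=\vY$ is already built into \eqref{eq:min norm}, so $h(\cdot,\vtheta)$ is a feasible point of \eqref{eq:minKnorm}.

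Next I would produce an explicit form for the candidate minimizer. Using the SVD $\nabla_{\vtheta}\vh(\mX,\vtheta_0) = \mV_1\mSigma_1\mU^{\T}$ from the proof of Theorem~\ref{thm:equivalence} and the closed form $\vtheta-\vtheta_0 = \mV_1\mSigma_1^{-1}\mU^{\T}(\vY-\vh_\mathrm{ini}(\mX))$, together with the Gram identity $\mK := \nabla_{\vtheta}\vh(\mX,\vtheta_0)^{\T}\nabla_{\vtheta}\vh(\mX,\vtheta_0) = \mU\mSigma_1^{2}\mU^{\T}$ (invertible by the full-rank hypothesis of Theorem~\ref{thm:equivalence}), I would collapse $\nabla_\vtheta h(\vx,\vtheta_0)^\T\mV_1\mSigma_1^{-1}\mU^\T$ into $\vk(\vx,\mX)\mK^{-1}$ and obtain
\begin{equation*}
h(\vx,\vtheta) - h_\mathrm{ini}(\vx) \;=\; \vk(\vx,\mX)\,\mK^{-1}(\vY-\vh_\mathrm{ini}(\mX)) \;=\; \sum_{i=1}^{n}\alpha_i\, k(\vx,\vx_i),
\end{equation*}
with $\valpha = \mK^{-1}(\vY-\vh_\mathrm{ini}(\mX))$. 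Hence the difference lies in the finite-dimensional subspace $\fM := \mathrm{span}\{k(\cdot,\vx_i)\}_{i=1}^{n} \subset H_k(\Omega)$, and $\vh(\mX,\vtheta)=\vY$ is automatic since $\sum_j \alpha_j k(\vx_i,\vx_j) = (\mK\valpha)_i = y_i - h_\mathrm{ini}(\vx_i)$.

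For optimality and uniqueness I would invoke an orthogonal decomposition relative to $\fM$. For any admissible $g := h - h_\mathrm{ini} \in H_k(\Omega)$ with $\vg(\mX) = \vY - \vh_\mathrm{ini}(\mX)$, write $g = g_\parallel + g_\perp$ with $g_\parallel \in \fM$ and $g_\perp \in \fM^{\perp}$. The reproducing property gives $g(\vx_i) = \left\langle g, k(\cdot,\vx_i)\right\rangle_k = \left\langle g_\parallel, k(\cdot,\vx_i)\right\rangle_k$, so the constraint involves only $g_\parallel$ and pins it down uniquely to $\sum_i \alpha_i k(\cdot,\vx_i)$ through the invertible $\mK$. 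Since $\norm{g}_k^2 = \norm{g_\parallel}_k^2 + \norm{g_\perp}_k^2$, the minimum of \eqref{eq:minKnorm} is attained iff $g_\perp = 0$, identifying the unique minimizer with $h(\cdot,\vtheta) - h_\mathrm{ini}$.

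The main obstacle will be reconciling the trace formulation of $\vh(\mX)=\vY$ with the pointwise reproducing evaluation used in the decomposition. Under Mercer's theorem $k$ is continuous, so elements of $H_k(\Omega)$ admit a unique continuous representative whose pointwise values at $\mX$ agree with the trace; under the stated regularity assumption on $H_k(\Omega)$ the two formulations coincide and the argument goes through without modification. A minor bookkeeping task is to confirm that the $\valpha$ produced by the parameter-space SVD and by inverting the functional Gram matrix are literally equal, which follows transparently from $\mK^{-1} = \mU\mSigma_1^{-2}\mU^{\T}$.
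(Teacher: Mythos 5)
Your proof is correct, but it takes a genuinely different route from the paper's. The paper does not construct the kernel interpolant explicitly; instead it uses Lemmas~\ref{lem:htotheta} and~\ref{lem:thetatoh} to set up a correspondence $h-h_\mathrm{ini} \leftrightarrow \vtheta' = \left\langle h-h_\mathrm{ini},\nabla_{\vtheta}h(\cdot,\vtheta_{0})\right\rangle_{k}$ satisfying $\norm{h-h_\mathrm{ini}}_{k}=\norm{\vtheta'}_{2}$, and then transfers the optimality and uniqueness already established in parameter space by Theorem~\ref{thm:equivalence} directly to the function-space problem \eqref{eq:minKnorm}. You instead prove optimality intrinsically in $H_k(\Omega)$ via the classical representer-theorem argument: derive the closed form $h(\vx,\vtheta)-h_\mathrm{ini}(\vx)=\vk(\vx,\mX)\mK^{-1}(\vY-\vh_\mathrm{ini}(\mX))$ from the SVD, then use the orthogonal decomposition relative to $\fM=\mathrm{span}\{k(\cdot,\vx_i)\}$ together with the reproducing property to show the constraint pins down $g_\parallel$ and the norm is minimized iff $g_\perp=0$. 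Your route buys an explicit formula for the minimizer (which would also give Lemma~\ref{linearity of solution-1} and Theorem~\ref{thm:inipredict} essentially for free) and a self-contained uniqueness argument that does not lean on the parameter-space uniqueness; its cost is that you must explicitly invoke invertibility of the Gram matrix $\mK$, which you correctly trace back to the full-rank hypothesis on $\nabla_{\vtheta}\vh(\mX,\vtheta_0)$ inherited from Theorem~\ref{thm:equivalence} via $\mK=\mU\mSigma_1^{2}\mU^{\T}$ (this also guarantees the $k(\cdot,\vx_i)$ are linearly independent, so $g_\parallel$ is well defined). Your handling of the trace-versus-pointwise issue via continuity of $k$ is also fine and is, if anything, more careful than the paper's.
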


\begin{proof}
  By Eq. \eqref{eq:linear} $h(\vx,\vtheta)-h_\mathrm{ini}=\nabla_{\vtheta}h(\vx,\vtheta_{0})\cdot\left(\vtheta-\vtheta_{0}\right)$.
  By Lemma \ref{lem:thetatoh}, $h(\cdot,\vtheta)-h_\mathrm{ini}\in H_{k}(\Omega)$.
  For any $h-h_\mathrm{ini}\in H_{k}(\Omega)$, by Lemma \ref{lem:htotheta},
  for $\vtheta'=\left\langle h-h_\mathrm{ini},\nabla_{\vtheta}h(\cdot,\vtheta_{0})\right\rangle _{k}$,
  $h-h_\mathrm{ini}=\nabla_{\vtheta}h(\vx,\vtheta_{0})^{\T}\vtheta'$. Then 
  \begin{align*}
    \norm{h-h_\mathrm{ini}}_{k} 
    &= \norm{\nabla_{\vtheta}h(\cdot,\vtheta_{0})^{\T}\vtheta'}_{k}\\
    &= \sqrt{\left\langle h-h_\mathrm{ini},\nabla_{\vtheta}h(\cdot,\vtheta_{0})^{\T}\vtheta'\right\rangle _{k}}\\
    &= \sqrt{\left\langle h-h_\mathrm{ini},\nabla_{\vtheta}h(\cdot,\vtheta_{0})\right\rangle _{k}^{\T}\vtheta'}\\
    &= \sqrt{\vtheta'^{\T}\vtheta'}= \norm{\vtheta'}_{2}.
  \end{align*}
  By Problem \eqref{eq:min norm}, for any $\vtheta_{1}\neq\vtheta-\vtheta_{0}$
  that satisfies $\vh(\mX,\vtheta_{1}+\vtheta_{0})=\vY$, we have
  \begin{equation*}
    \norm{\vtheta_{1}}_{2}>\norm{\vtheta-\vtheta_{0}}_{2}.
  \end{equation*}
  Then, for problem \eqref{eq:minKnorm}, for any $h_{1}$ satisfying
  $h_{1}-h_\mathrm{ini}\in H_{k}(\Omega)$, $h_{1}(\mX)=\vY$ and
  $h_{1}(\vx)\neq h(\vx,\vtheta)$, let $\vtheta_{1}=\left\langle h_{1}-h_\mathrm{ini},\nabla_{\vtheta}h(\cdot,\vtheta_{0})\right\rangle _{k}$.
  Clearly, $\vtheta_{1}\neq\vtheta-\vtheta_{0}$, which leads to
  \begin{equation*}
  \norm{h_{1}-h_\mathrm{ini}}_{k}=\norm{\vtheta_{1}}_{2}>\norm{\vtheta-\vtheta_{0}}_{2}=\norm{h(\vx,\vtheta)-h_\mathrm{ini}}_{k}.
  \end{equation*}
  Therefore $h(\vx,\vtheta)$ uniquely solves problem \eqref{eq:minKnorm}.
\end{proof}
Now, we obtain the equivalence between the long time solution of dynamics
\eqref{eq:gdh-h2} and the solution of optimization problem \eqref{eq:minKnorm-1}
as follows. 
\begin{cor}
  \label{cor:equiv-h}Let $h(\vx,t)$ be the solution of dynamics
  \begin{equation}
    \frac{\D }{\D t}h(\vx,t)=-\vk(\vx,\mX)\nabla_{\vh(\mX,t)}\dist\left(\vh(\mX,t),\vY\right),\label{eq:gdh-h2}
  \end{equation}
  with $h(\vx,0)=h(\vx,\vtheta_{0})$ for certain $\vtheta_{0}$. Then $h(\vx,\infty)$
  uniquely solves optimization problem 
  \begin{equation}
    \min_{h-h_\mathrm{ini}\in H_{k}(\Omega)}\norm{h-h_\mathrm{ini}}_{k},\quad s.t.,\quad \vh(\mX)=\vY.\label{eq:minKnorm-1}
  \end{equation}
\end{cor}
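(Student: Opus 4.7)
The plan is to reduce this corollary to the two theorems already proved by observing that the function-space dynamics \eqref{eq:gdh-h2} is the push-forward of the parameter-space gradient flow \eqref{eq:gf-thm2} under the linearization \eqref{eq:linear}. First, I would let $\vtheta(t)$ solve \eqref{eq:gf-thm2} with $\vtheta(0)=\vtheta_0$, define $\tilde h(\vx,t):=h(\vx,\vtheta(t))$, and compute via the chain rule that
\begin{equation*}
\partial_t \tilde h(\vx,t) \;=\; \nabla_{\vtheta}h(\vx,\vtheta_0)^{\T}\,\dot{\vtheta}(t) \;=\; -\nabla_{\vtheta}h(\vx,\vtheta_0)^{\T}\nabla_{\vtheta}\vh(\mX,\vtheta_0)\,\nabla_{\tilde{\vh}(\mX,t)}\dist(\tilde{\vh}(\mX,t),\vY),
\end{equation*}
which by the definition of the kernel is exactly $-\vk(\vx,\mX)\nabla_{\tilde{\vh}(\mX,t)}\dist(\tilde{\vh}(\mX,t),\vY)$. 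Hence $\tilde h$ satisfies \eqref{eq:gdh-h2} with $\tilde h(\vx,0)=h(\vx,\vtheta_0)$.

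Second, I would argue $h(\vx,t)\equiv \tilde h(\vx,t)$ by uniqueness. The key observation is that \eqref{eq:gdh-h2} depends on $h$ only through the $n$-vector $\vh(\mX,t)$. Specializing \eqref{eq:gdh-h2} to $\vx\in\mX$ yields a closed, finite-dimensional ODE
\begin{equation*}
\frac{\D}{\D t}\vh(\mX,t) \;=\; -\vk(\mX,\mX)\,\nabla_{\vh(\mX,t)}\dist(\vh(\mX,t),\vY),
\end{equation*}
whose right-hand side is locally Lipschitz since $\dist$ is continuously differentiable, so $\vh(\mX,t)$ is uniquely determined. Plugging this back into \eqref{eq:gdh-h2} for a generic $\vx$ reduces the equation at that $\vx$ to an ODE in $\sR$ with a prescribed time-dependent forcing, which is solved uniquely by integration from the initial condition $h(\vx,\vtheta_0)$. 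This yields $h(\vx,t)=\tilde h(\vx,t)=h(\vx,\vtheta(t))$.

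Third, taking $t\to\infty$, Theorem \ref{thm:equivalence} (which applies because the full-rank condition on $\nabla_{\vtheta}\vh(\mX,\vtheta_0)$ is assumed throughout this framework) guarantees that $\vtheta(\infty)$ exists and solves the min-parameter-norm problem \eqref{eq:min norm}; then Theorem \ref{thm:min kernel norm} identifies $h(\vx,\vtheta(\infty))$ as the unique solution of \eqref{eq:minKnorm-1}. Combining with $h(\vx,\infty)=h(\vx,\vtheta(\infty))$ finishes the proof.

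The main obstacle I anticipate is being rigorous about existence and uniqueness for \eqref{eq:gdh-h2}, since $h(\cdot,t)$ a priori lives in an infinite-dimensional function space and one would otherwise need a Banach-space Picard--Lindelöf argument. The reduction in the second step sidesteps this by exploiting that the driving quantity $\nabla_{\vh(\mX,t)}\dist$ only sees $n$ values of $h$; no functional-analytic machinery beyond Mercer's theorem and finite-dimensional ODE theory is needed.
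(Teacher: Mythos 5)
Your proposal is correct and follows essentially the same route as the paper: identify the function-space flow with the push-forward $h(\vx,\vtheta(t))$ of the parameter-space flow and then invoke Theorems \ref{thm:equivalence} and \ref{thm:min kernel norm}. The only difference is that you make explicit the uniqueness argument for \eqref{eq:gdh-h2} (via the closed finite-dimensional ODE on $\vh(\mX,t)$ and subsequent integration at generic $\vx$), a step the paper's proof simply asserts by saying the two dynamics "are the same"; this is a welcome tightening rather than a different approach.
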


\begin{proof}
  Notice that dynamics \eqref{eq:gdh-h2} is the same as dynamics \eqref{eq:gfh}
  obtained from \eqref{eq:gdh}. Therefore, for $h(\vx,0)=h(\vx,\vtheta_{0})$,
  $h(\vx,t)=h(\vx,\vtheta(t))$ where $\vtheta(t)$ is the solution of dynamics
  \eqref{eq:gdh} with   $\vtheta(0)=\vtheta_{0}$. By
  Theorem \ref{thm:equivalence} and \ref{thm:min kernel norm}, $h(\vx,\infty)=h(\vx,\vtheta(\infty))$
  uniquely solves dynamics \eqref{eq:minKnorm-1}.
\end{proof}

\section{Impact of non-zero initial output \label{sec:AppendixNegative-impact-of}}

In this section, we use the above framework to show that a random
non-zero initial DNN output leads to a specific type of generalization
error. We begin with a lemma showing the linear composition property
of the final DNN outputs in the NTK regime.
\begin{lem}
  \label{linearity of solution-1} For a fixed kernel function $k\in L^2(\Omega\times\Omega)$,
  for any two training sets $\{\mX;\vY_{1}\}$ and $\{\mX;\vY_{2}\}$, where
  $\vY_{1}=(y_{1}^{(1)},\cdots,y_{n}^{(1)})^{\T}$ and $\vY_{2}=(y_{1}^{(2)},\cdots,y_{n}^{(2)})^{\T}$,
  the following linear relation holds
  \begin{equation}
    h_{k}(\cdot;0,\mX,\vY_{1}+\vY_{2})=h_{k}(\cdot;0,\mX,\vY_{1})+h_{k}(\cdot;0,\mX,\vY_{2}).
  \end{equation}
\end{lem}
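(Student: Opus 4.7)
The plan is to read linearity of $h_k(\cdot;0,\mX,\vY)$ in $\vY$ directly off the characterization of the minimizer provided by Theorem \ref{thm:min kernel norm}: $h_k(\cdot;0,\mX,\vY)$ is the unique element of $H_k(\Omega)$ minimizing $\norm{h}_k$ subject to $\vh(\mX)=\vY$. If I can show that this minimizer depends linearly on $\vY$, then additivity in $\vY$ for $\vY_1+\vY_2$ follows immediately.

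To make this concrete I would carry out a representer-type decomposition. Any admissible $h\in H_k(\Omega)$ splits as $h = h_\parallel + h_\perp$ with $h_\parallel \in \mathrm{span}\{k(\cdot,\vx_i)\}_{i=1}^{n}$ and $h_\perp$ orthogonal in $\langle\cdot,\cdot\rangle_k$ to that span. By the reproducing property listed in Section \ref{subsec:Kernel-regime}, $h_\perp(\vx_i) = \langle h_\perp, k(\cdot,\vx_i)\rangle_k = 0$, so $\vh_\parallel(\mX) = \vY$; and $\norm{h}_k^2 = \norm{h_\parallel}_k^2 + \norm{h_\perp}_k^2$, so the minimizer must have $h_\perp = 0$. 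Writing $h_\parallel = \sum_{i=1}^n \alpha_i k(\cdot,\vx_i)$, the constraint becomes $\mK \valpha = \vY$ with $\mK_{ij} = k(\vx_i,\vx_j)$. The uniqueness from Theorem \ref{thm:min kernel norm} pins down a unique $\valpha(\vY)$ (taking the minimum-norm solution via the Moore--Penrose pseudoinverse if $\mK$ is singular), and this $\valpha(\vY)$ depends \emph{linearly} on $\vY$. Hence $\vY \mapsto h_k(\cdot;0,\mX,\vY) = \sum_i \alpha_i(\vY) k(\cdot,\vx_i)$ is linear, proving the lemma.

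The main subtlety I anticipate is the meaning of the constraint $\vh(\mX)=\vY$: if $H_k(\Omega)$ does not admit pointwise evaluation, the trace interpretation invoked in Theorem \ref{thm:min kernel norm} must be used. Fortunately the argument above is purely algebraic once one accepts that same interpretation, since it only uses the reproducing property against the fixed kernel sections $k(\cdot,\vx_i)$. As a sanity-check alternative I would also note that the parameter-space proof of Theorem \ref{thm:equivalence} already gives the closed form $\vtheta(\infty) - \vtheta_0 = \mV_1\mSigma_1^{-1}\mU^\T(\vY - \vh(\mX,\vtheta_0))$; specializing to a $\vtheta_0$ with $h(\cdot,\vtheta_0)=0$, substituting into Eq.~\eqref{eq:linear}, and using that $\vtheta(\infty)-\vtheta_0$ is then linear in $\vY$ yields the same conclusion, giving an independent verification via the equivalence corollary.
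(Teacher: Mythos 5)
Your proof is correct, but it takes a genuinely different route from the paper's. The paper proves the lemma dynamically: it takes the MSE gradient flow $\partial_t h=-\vk(\cdot,\mX)(\vh(\mX,t)-\vY)$ with zero initial condition for $\vY_1$ and $\vY_2$ separately, observes that the sum of the two solutions solves the same flow for $\vY_1+\vY_2$, and then invokes Corollary \ref{cor:equiv-h} to identify each long-time limit with the corresponding kernel-norm minimizer. Your argument instead works statically with the variational problem itself, via a representer-theorem decomposition $h=h_\parallel+h_\perp$, yielding the explicit closed form $h_k(\cdot;0,\mX,\vY)=\sum_i\alpha_i(\vY)\,k(\cdot,\vx_i)$ with $\mK\valpha=\vY$; linearity of $\vY\mapsto\valpha(\vY)$ then gives the result. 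Your route buys an explicit formula and full linearity of the map (not just additivity for two particular label vectors), and it avoids singling out one loss function to drive the dynamics --- arguably more natural given that the lemma is a statement about the optimization problem \eqref{eq: problem min kernel norm}. The paper's route stays entirely within the gradient-flow narrative and needs no representer argument, at the cost of routing through the equivalence corollary. Two minor points to tighten in your version: solvability and well-posedness of $\mK\valpha=\vY$ is guaranteed here because the standing full-rank assumption on $\nabla_{\vtheta}\vh(\mX,\vtheta_0)$ makes $\mK=\nabla_{\vtheta}\vh(\mX,\vtheta_0)^{\T}\nabla_{\vtheta}\vh(\mX,\vtheta_0)$ positive definite, so the pseudoinverse caveat is not needed; and the orthogonal decomposition is legitimate because $\mathrm{span}\{k(\cdot,\vx_i)\}_{i=1}^n$ is finite-dimensional, hence closed in $H_k(\Omega)$. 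Your parameter-space sanity check via the closed form $\vtheta(\infty)-\vtheta_0=\mV_1\mSigma_1^{-1}\mU^{\T}\vY$ (when $h(\cdot,\vtheta_0)=0$) is also valid and is in spirit closer to the paper's reliance on the equivalence theorems.
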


\begin{proof}
  Let $h_{1}(\vx,t)$, $h_{2}(\vx,t)$ be the solutions of the gradient
  flow dynamics with respect to a MSE loss $\dist\left(\vh(\mX,t),\vY\right)=\frac{1}{2}\sum_{i=1}^{n}(h(\vx_{i},t)-y_{i})^{2}$
  \begin{align}
    \frac{\partial}{\partial t}h(\vx,t) & =-\vk(\vx,\mX)\left(\vh(\mX,t)-\vY\right)\label{eq:MSE gradient flow}
  \end{align}
  with training labels $\vY=\vY_{1}$ and $\vY=\vY_{2}$, respectively, and
  $h_{1}(\vx,0)=h_{2}(\vx,0)=h_\mathrm{ini}=0$. Then 
  \begin{align*}
    \partial_{t}\left(h_{1}+h_{2}\right) 
    &= -\vk(\cdot,\mX)\left(\vh_{1}(\mX,\vtheta(t))-\vY_{1}\right)-\vk(\cdot,\mX)\left(\vh_{2}(\mX,\vtheta(t))-\vY_{2}\right).\\
    &= -\vk(\cdot,\mX)\left[\left(\vh_{1}+\vh_{2}\right)(\mX,\vtheta(t))-\left(\vY_{1}+\vY_{2}\right)\right]
  \end{align*}
  with initial value $\left(h_{1}+h_{2}\right)(\cdot,0)=0$. Therefore
  $h_{1}+h_{2}$ solves dynamics \eqref{eq:MSE gradient flow} for $\vY=\vY_{1}+\vY_{2}$
  and $h_\mathrm{ini}=0$. Then, by Corollary \ref{cor:equiv-h}, we
  obtain
  \begin{equation*}
    h_{k}(\cdot;0,\mX,\vY_{1}+\vY_{2})=h_{1}(\cdot,\infty)+h_{2}(\cdot,\infty)=h_{k}(\cdot;0,\mX,\vY_{1})+h_{k}(\cdot;0,\mX,\vY_{2}).
  \end{equation*}
\end{proof}
Using Lemma (\ref{linearity of solution-1}), we obtain the following
quantitative relation between the solution with zero initial output
and that with non-zero initial output.
\begin{thm}
  \label{thm:inipredict-1} (Theorem \ref{thm:inipredict} in main text)
  For a fixed kernel function $k\in L^{2}(\Omega\times\Omega)$, and
  training set $\{\mX;\vY\}$, for any initial function $h_{\mathrm{ini}}\in L^{\infty}(\Omega)$,
  $h_{k}(\cdot;h_\mathrm{ini},\mX,\vY)$ can be decomposed as
  \begin{equation}
    h_{k}(\cdot;h_\mathrm{ini},\mX,\vY)=h_{k}(\cdot;0,\mX,\vY)+h_\mathrm{ini}-h_{k}(\cdot;0,\mX,h_\mathrm{ini}(\mX)).\label{eq:networkpredictini-1}
  \end{equation}
\end{thm}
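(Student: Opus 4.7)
\textbf{Proof proposal for Theorem \ref{thm:inipredict-1}.}
The plan is to reduce the problem with a nonzero initial function to the problem with zero initial function by a simple change of variables in the kernel-norm minimization characterization \eqref{eq:minKnorm}, and then to invoke the label-linearity established in Lemma \ref{linearity of solution-1}. Concretely, by Theorem \ref{thm:min kernel norm} (applied via Corollary \ref{cor:equiv-h}), $h_k(\cdot;h_{\mathrm{ini}},\mX,\vY)$ is the unique minimizer of
\begin{equation*}
\min_{h-h_{\mathrm{ini}}\in H_k(\Omega)}\|h-h_{\mathrm{ini}}\|_k \quad \text{s.t.} \quad \vh(\mX)=\vY.
\end{equation*}

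The first step is to set $g := h - h_{\mathrm{ini}}$. Then $g\in H_k(\Omega)$, $\|h-h_{\mathrm{ini}}\|_k=\|g\|_k$, and the constraint $\vh(\mX)=\vY$ becomes $\vg(\mX)=\vY-h_{\mathrm{ini}}(\mX)$. Thus the minimization problem is equivalent to
\begin{equation*}
\min_{g\in H_k(\Omega)}\|g\|_k \quad \text{s.t.} \quad \vg(\mX)=\vY-h_{\mathrm{ini}}(\mX),
\end{equation*}
whose unique solution, by definition, is $h_k(\cdot;0,\mX,\vY-h_{\mathrm{ini}}(\mX))$. Back-substituting gives
\begin{equation*}
h_k(\cdot;h_{\mathrm{ini}},\mX,\vY)=h_{\mathrm{ini}}+h_k(\cdot;0,\mX,\vY-h_{\mathrm{ini}}(\mX)).
\end{equation*}

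The second step is to apply Lemma \ref{linearity of solution-1}, which tells us that, for fixed kernel $k$ and zero initial output, the map $\vY\mapsto h_k(\cdot;0,\mX,\vY)$ is additive in the label vector. Applying this additivity to the split $\vY-h_{\mathrm{ini}}(\mX)=\vY+(-h_{\mathrm{ini}}(\mX))$ (and using that $h_k(\cdot;0,\mX,-\vZ)=-h_k(\cdot;0,\mX,\vZ)$, which follows from the same lemma by taking $\vY_1=\vZ$, $\vY_2=-\vZ$) yields
\begin{equation*}
h_k(\cdot;0,\mX,\vY-h_{\mathrm{ini}}(\mX))=h_k(\cdot;0,\mX,\vY)-h_k(\cdot;0,\mX,h_{\mathrm{ini}}(\mX)).
\end{equation*}
Combining the two displays gives Eq. \eqref{eq:networkpredictini-1}.

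I expect the main obstacle to be a matter of care rather than difficulty: justifying that $h_{\mathrm{ini}}(\mX)\in\sR^n$ is a well-defined vector of evaluations even though $h_{\mathrm{ini}}\in L^\infty(\Omega)$ only, and that the change of variables $g=h-h_{\mathrm{ini}}$ legitimately transports the trace-sense constraint $\vh(\mX)=\vY$ into $\vg(\mX)=\vY-h_{\mathrm{ini}}(\mX)$ in the same trace sense. These points are already implicitly addressed in the paper's setup of \eqref{eq: problem min kernel norm} and the realization $h_{\mathrm{ini}}(\vx)=h(\vx,\vtheta_0)$ with the kernel-regime regularity of $\nabla_{\vtheta}h(\cdot,\vtheta_0)\in L^2(\Omega;\sR^M)$, so once one agrees that $h_{\mathrm{ini}}$ has well-defined traces at $\mX$, the argument above is purely algebraic and requires no further analytic work.
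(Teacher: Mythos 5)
Your proposal is correct and follows essentially the same route as the paper's own proof: reduce to the zero-initialization problem via the substitution $g=h-h_{\mathrm{ini}}$, identify the result as $h_{\mathrm{ini}}+h_k(\cdot;0,\mX,\vY-h_{\mathrm{ini}}(\mX))$, and then split the labels using the additivity of Lemma \ref{linearity of solution-1}. The only addition you make is spelling out that $h_k(\cdot;0,\mX,-\vZ)=-h_k(\cdot;0,\mX,\vZ)$ (which also implicitly uses $h_k(\cdot;0,\mX,\vzero)=0$), a step the paper takes for granted.
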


\begin{proof}
  Because $h_{k}(\cdot;h_\mathrm{ini},\mX,\vY)$ is the solution of problem
  \eqref{eq: problem min kernel norm}. Then $h_{k}(\cdot;h_{\mathrm{ini}},\mX,\vY)-h_\mathrm{ini}$
  is the solution of problem
  \begin{equation*}
  \min_{h\in H_{k}(\Omega)}\norm{h}_{k},\ s.t.,\ \vh(\mX)=\vY-h_\mathrm{ini}(\mX),
  \end{equation*}
  whose solution is denoted as $h_{k}(\cdot;0,\mX,\vY-h_\mathrm{ini}(\mX))$.
  By Lemma \ref{linearity of solution-1}, 
  \begin{equation}
    h_{k}(\cdot;0,\mX,\vY-h_\mathrm{ini}(\mX))=h_{k}(\cdot;0,\mX,\vY)-h_{k}(\cdot;0,\mX,h_\mathrm{ini}(\mX)).
  \end{equation}
  Therefore 
  \begin{align}
    h_{k}(\cdot;h_\mathrm{ini},\mX,\vY) 
    &= h_{k}(\cdot;0,\mX,\vY-h_\mathrm{ini}(\mX))+h_\mathrm{ini}\nonumber \\
    &= h_{k}(\cdot;0,\mX,\vY)+h_\mathrm{ini}-h_{k}(\cdot;0,\mX,h_\mathrm{ini}(\mX)).
  \end{align}
\end{proof}
The generalization error of DNN contributed by a random initial output
can be estimated as follows.
\begin{thm}
  \label{thm:iniextraerror-1} (Theorem \ref{thm:iniextraerror} in
  main text) For a target function $f\in L^{\infty}(\Omega)$, if $h_{\mathrm{ini}}$
  is generated from an unbiased distribution of random functions $\mu$ such
  that $\Exp_{h_\mathrm{ini}\sim \mu}h_\mathrm{ini}=0$, then the
  generalization error of $h_{k}(\cdot;h_\mathrm{ini},\mX,f(\mX))$ can be
  decomposed as follows 
\begin{align*}
  \Exp_{h_\mathrm{ini}\sim \mu}\RS\left(h_{k}(\cdot;h_\mathrm{ini},\mX,f(\mX)),f\right)
  &= \RS\left(h_{k}(\cdot;0,\mX,f(\mX)),f\right)\\
  &~~~~+\Exp_{h_\mathrm{ini}\sim \mu}\RS\left(h_{k}(\cdot;0,\mX,h_\mathrm{ini}(\mX)),h_\mathrm{ini}\right),
\end{align*}
  where $\RS(h_{k}(\cdot;h_\mathrm{ini},\mX,f(\mX)),f)=\norm{h_{k}(\cdot;h_\mathrm{ini},\mX,f(\mX))-f}_{L^{2}(\Omega)}^{2}$. 
\end{thm}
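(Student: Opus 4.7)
The plan is to reduce the theorem to the decomposition in Theorem \ref{thm:inipredict-1} and a cancellation of a cross term via the unbiasedness assumption $\Exp_{h_\mathrm{ini}\sim\mu} h_\mathrm{ini} = 0$. First I would apply Theorem \ref{thm:inipredict-1} with $\vY = f(\mX)$ to write
\begin{equation*}
  h_{k}(\cdot;h_\mathrm{ini},\mX,f(\mX)) - f = \bigl[h_{k}(\cdot;0,\mX,f(\mX)) - f\bigr] + \bigl[h_\mathrm{ini} - h_{k}(\cdot;0,\mX,h_\mathrm{ini}(\mX))\bigr].
\end{equation*}
Denote the two bracketed terms by $A$ (deterministic in $h_\mathrm{ini}$) and $B(h_\mathrm{ini})$. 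Expanding the squared $L^{2}(\Omega)$ norm gives
\begin{equation*}
  \RS\bigl(h_{k}(\cdot;h_\mathrm{ini},\mX,f(\mX)),f\bigr) = \norm{A}_{L^{2}}^{2} + 2\left\langle A, B(h_\mathrm{ini})\right\rangle_{L^{2}} + \norm{B(h_\mathrm{ini})}_{L^{2}}^{2}.
\end{equation*}

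Next I would take expectation with respect to $\mu$ term by term. The first term equals $\RS(h_{k}(\cdot;0,\mX,f(\mX)),f)$ and is unchanged. The third term equals $\RS(h_{k}(\cdot;0,\mX,h_\mathrm{ini}(\mX)),h_\mathrm{ini})$ (by symmetry of the norm in its two arguments), which yields the second summand on the right-hand side after taking expectation. So the entire claim reduces to showing that the cross term vanishes in expectation, i.e.\
\begin{equation*}
  \Exp_{h_\mathrm{ini}\sim\mu}\left\langle A, B(h_\mathrm{ini})\right\rangle_{L^{2}} = 0.
\end{equation*}

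For this I would interchange expectation and $L^{2}$ inner product using Fubini (justified because $f\in L^{\infty}(\Omega)$, $\Omega$ is compact, and $h_\mathrm{ini}\in L^{\infty}(\Omega)$, so integrability is not an issue), which reduces matters to $\Exp_{h_\mathrm{ini}\sim\mu} B(h_\mathrm{ini}) = 0$. Here the key observation is that Lemma \ref{linearity of solution-1} makes the map $\vY \mapsto h_{k}(\cdot;0,\mX,\vY)$ linear in $\vY$, and the sampling map $h_\mathrm{ini}\mapsto h_\mathrm{ini}(\mX)$ is linear. Consequently $h_\mathrm{ini}\mapsto B(h_\mathrm{ini}) = h_\mathrm{ini} - h_{k}(\cdot;0,\mX,h_\mathrm{ini}(\mX))$ is a linear operator on $L^{\infty}(\Omega)$, so $\Exp_\mu B(h_\mathrm{ini}) = B(\Exp_\mu h_\mathrm{ini}) = B(0) = 0$.

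The main obstacle, though largely technical, is the Fubini step and the use of linearity in expectation (which implicitly asks that $\Exp_\mu$ commutes with the linear operator $B$). Once the unbiasedness $\Exp_\mu h_\mathrm{ini}=0$ is combined with linearity of $B$, the cross term cancels and the three-term decomposition collapses to the claimed sum of the ``clean'' generalization error $\RS(h_{k}(\cdot;0,\mX,f(\mX)),f)$ and the initialization-induced error $\Exp_\mu \RS(h_{k}(\cdot;0,\mX,h_\mathrm{ini}(\mX)),h_\mathrm{ini})$, completing the proof.
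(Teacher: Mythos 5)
Your proposal is correct and follows essentially the same route as the paper's proof: apply Theorem \ref{thm:inipredict-1} to split $h_{k}(\cdot;h_\mathrm{ini},\mX,f(\mX))-f$ into a deterministic part and an initialization-induced part, expand the squared $L^{2}$ norm, and kill the cross term using $\Exp_\mu h_\mathrm{ini}=0$ together with the linearity from Lemma \ref{linearity of solution-1}. Your explicit mention of the Fubini/interchange step is a minor technical elaboration the paper leaves implicit, but the argument is the same.
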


\begin{proof}
  By Theorem \ref{thm:inipredict-1},
  \begin{align}
    &~~~~\norm{h_{k}(\cdot;h_\mathrm{ini},\mX,f(\mX))-f}_{L^{2}(\Omega)}^{2}\nonumber\\
    &= \norm{\left[h_{k}(\cdot;0,\mX,f(\mX))-f\right]+\left[h_\mathrm{ini}-h_{k}(\cdot;0,\mX,h_\mathrm{ini}(\mX))\right]}_{L^{2}(\Omega)}^{2}\nonumber\\
    &= \norm{h_{k}(\cdot;0,\mX,f(\mX))-f}_{L^{2}(\Omega)}^{2}+\norm{h_\mathrm{ini}-h_{k}(\cdot;0,\mX,h_\mathrm{ini}(\mX))}_{L^{2}(\Omega)}^{2}\nonumber\\
    &~~~~+2\left\langle h_{k}(\cdot;0,\mX,f(\mX))-f,h_\mathrm{ini}-h_{k}(\cdot;0,\mX,h_\mathrm{ini}(\mX))\right\rangle _{L^{2}(\Omega)}.
  \end{align}
  Because $\Exp_{h_\mathrm{ini}\sim \mu}h_{\rm ini}=0$, by Lemma \ref{linearity of solution-1},
  $\Exp_{h_\mathrm{ini}\sim \mu}\left[h_{k}(\cdot;0,\mX,h_\mathrm{ini}(\mX))\right]=\left[h_{k}(\cdot;0,\mX,\Exp_{h_\mathrm{ini}\sim \mu}h_\mathrm{ini}(\mX))\right]=0$,
  \begin{align}
    &~~~~\Exp_{h_\mathrm{ini}\sim \mu}\left\langle h_{k}(\cdot;0,\mX,f(\mX))-f,h_\mathrm{ini}-h_{k}(\cdot;0,\mX,h_\mathrm{ini}(\mX))\right\rangle _{L^{2}(\Omega)}\nonumber \\
    &= \left\langle h_{k}(\cdot;0,\mX,f(\mX))-f,\Exp_{h_\mathrm{ini}\sim \mu}\left[h_\mathrm{ini}-h_{k}(\cdot;0,\mX,h_\mathrm{ini}(\mX))\right]\right\rangle _{L^{2}(\Omega)} 
    = 0.
  \end{align}
  Then we obtain
\begin{align*}
    \Exp_{h_\mathrm{ini}\sim \mu}\RS\left(h_{k}(\cdot;h_\mathrm{ini},\mX,f(\mX)),f\right)
    &= \RS\left(h_{k}(\cdot;0,\mX,f(\mX)),f\right)\\
    &~~~~+\Exp_{h_\mathrm{ini}\sim \mu}\RS\left(h_{k}(\cdot;0,\mX,h_\mathrm{ini}(\mX)),h_\mathrm{ini}\right)
\end{align*}
\end{proof}

\section{AntiSymmetrical Initialization trick (ASI) \label{sec:AppendixASI}}

We design an AntiSymmetrical Initialization trick (ASI) which can
make the initial output zero but also keep the kernel invariant. Let
$h_{i}^{[l]}$ be the output of the $i$th node of the $l$th layer
of a $L$ layer DNN. Then, $h_{i}^{[l]}(\vx)=\sigma_{i}^{[l]}(\mW_{i}^{[l]}\cdot \vh^{[l-1]}(\vx)+b_{i}^{[l]})$,
for $i=1,\cdots,m_{l}$. For the $i$th neuron of the output layer
of DNN $h_{i}^{[L]}(\vx)=\mW_{i}^{[L]}\cdot \vh^{[L-1]}+b_{i}^{[L]}.$ After
initializing the network with any conventional method, we obtain $h^{[L]}(\vx,\vtheta(0))$,
where 
\begin{equation*}
  \vtheta(0)=(\mW^{[L]}(0),\vb^{[L]}(0),\mW^{[L-1]}(0),\vb^{[L-1]}(0),\cdots,\vb^{[1]}(0)).
\end{equation*}
The ASI for general loss functions is to consider a new DNN with output
$h_{\mathrm{ASI}}(\vx,\vTheta(t))=\frac{\sqrt{2}}{2}h^{[L]}(\vx,\vtheta(t))-\frac{\sqrt{2}}{2}h^{[L]}(\vx,\vtheta'(t))$
where $\vTheta=(\vtheta,\vtheta')$, $\vTheta$ is initialized such that
$\vtheta'(0)=\vtheta(0)$. We will prove that ASI trick
eliminates the nonzero prior without changing the kernel $k$.
\begin{thm}
  (Theorem \ref{thm:kernelinv} in main text) By applying trick ASI
  to any DNN $h(\vx,\vtheta(t))$ initialized by $\vtheta(0)=\vtheta_{0}$
  such that $h_{\mathrm{ini}}=h(\vx,\vtheta_{0})\neq0$, we obtain a new
  DNN $h_{\mathrm{ASI}}(\vx,\vTheta(t))=\frac{\sqrt{2}}{2}h(\vx,\vtheta_{1}(t))-\frac{\sqrt{2}}{2}h(\vx,\vtheta_{2}(t))$
  ($\vTheta=(\vtheta_{1},\vtheta_{2})$) with initial value $\vtheta_{1}(0)=\vtheta_{2}(0)=\vtheta_{0}$.
  Then, for any general loss function $\dist$, in the NTK regime, the
  evolution of both $h(\vx,\vtheta(t))$ and $h_{\mathrm{ASI}}(\vx,\vTheta(t))$
  under gradient flow of both DNNs follows kernel dynamics
  \begin{equation}
    \partial_{t}h'=-\vk(\cdot,\mX)\nabla_{\vh(\mX,t)}\dist\left(\vh'(\mX,t),\vY\right),\label{eq:samedy-1}
  \end{equation}
  with initial value $h'(\cdot,0)=h_{\mathrm{ini}}$ and $h'(\cdot,0)=0$,
  respectively, where $\{\mX;\vY\}$ is the training set, $k(\vx,\vx')=k_{\vtheta_0}(\vx,\vx')=\nabla_{\vtheta}h(\vx,\vtheta_{0})^{\T}\nabla_{\vtheta}h(\vx',\vtheta_{0})$.
\end{thm}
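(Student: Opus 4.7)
The plan is to perform a direct computation in the linearized (NTK) regime, handling the two DNNs side by side. For the original DNN, the function-space gradient flow equation \eqref{eq:gfh} with initial value $h(\cdot,0)=h_{\mathrm{ini}}$ is exactly the statement of Section \ref{subsec:Kernel-regime}, so nothing new needs to be done there. The entire work is on the ASI network $h_{\mathrm{ASI}}(\vx,\vTheta(t))=\tfrac{\sqrt{2}}{2}h(\vx,\vtheta_{1}(t))-\tfrac{\sqrt{2}}{2}h(\vx,\vtheta_{2}(t))$, and the goal is to show (i) that its initial output vanishes, (ii) that its tangent kernel at initialization coincides with $k_{\vtheta_0}$, and (iii) that the resulting function-space flow has exactly the form \eqref{eq:samedy-1} with initial value $0$.

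First I would linearize each branch of the ASI network around $\vtheta_{0}$ using \eqref{eq:linear}. Since $\vtheta_{1}(0)=\vtheta_{2}(0)=\vtheta_{0}$, the $h(\vx,\vtheta_{0})$ terms cancel, giving
\[
h_{\mathrm{ASI}}(\vx,\vTheta)=\tfrac{\sqrt{2}}{2}\nabla_{\vtheta}h(\vx,\vtheta_{0})^{\T}(\vtheta_{1}-\vtheta_{2}),
\]
which immediately yields $h_{\mathrm{ASI}}(\vx,\vTheta(0))=0$. Computing $\nabla_{\vTheta}h_{\mathrm{ASI}}$ shows that its component along $\vtheta_{1}$ is $\tfrac{\sqrt{2}}{2}\nabla_{\vtheta}h(\vx,\vtheta_{0})$ and along $\vtheta_{2}$ is $-\tfrac{\sqrt{2}}{2}\nabla_{\vtheta}h(\vx,\vtheta_{0})$. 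Summing squares in the NTK formula gives
\[
k_{\mathrm{ASI}}(\vx,\vx')=\tfrac{1}{2}k_{\vtheta_0}(\vx,\vx')+\tfrac{1}{2}k_{\vtheta_0}(\vx,\vx')=k_{\vtheta_0}(\vx,\vx'),
\]
which is the kernel-invariance claim.

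Next I would write down the parameter-space flow for $\vTheta$. Applying the chain rule to $\RS(\vTheta)=\dist(\vh_{\mathrm{ASI}}(\mX,\vTheta),\vY)$ produces
\begin{align*}
\tfrac{\D}{\D t}\vtheta_{1} &=-\tfrac{\sqrt{2}}{2}\nabla_{\vtheta}\vh(\mX,\vtheta_{0})\nabla_{\vh_{\mathrm{ASI}}(\mX)}\dist(\vh_{\mathrm{ASI}}(\mX,t),\vY),\\
\tfrac{\D}{\D t}\vtheta_{2} &=+\tfrac{\sqrt{2}}{2}\nabla_{\vtheta}\vh(\mX,\vtheta_{0})\nabla_{\vh_{\mathrm{ASI}}(\mX)}\dist(\vh_{\mathrm{ASI}}(\mX,t),\vY),
\end{align*}
so the difference $\vtheta_{1}-\vtheta_{2}$ evolves at twice the rate of either branch. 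Substituting into the derivative of $h_{\mathrm{ASI}}$ and combining the two $\tfrac{\sqrt{2}}{2}$ factors with the remaining $\sqrt{2}$ exactly reproduces $-\vk(\vx,\mX)\nabla_{\vh(\mX,t)}\dist(\vh_{\mathrm{ASI}}(\mX,t),\vY)$, which is \eqref{eq:samedy-1} for $h'=h_{\mathrm{ASI}}$ with initial value $0$.

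I do not expect a real obstacle here, since everything reduces to bookkeeping once we work in the linearized regime; the only subtle point is to confirm that the two antisymmetric copies really do contribute a sum of squares to the NTK rather than cancel, which is what the placement of the minus sign on the output (not on the parameters) ensures. This is precisely what distinguishes ASI from the ``doubling trick'' of \citet{chizat2018note}, where the antisymmetry on the output weights causes a partial cancellation of the kernel, as treated separately in Appendix \ref{sec:doubling-trickappendix}.
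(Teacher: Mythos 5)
Your proposal is correct and follows essentially the same route as the paper: verify that $h_{\mathrm{ASI}}(\cdot,\vTheta(0))=0$ and that $k_{\vTheta_0}=k_{\vtheta_0}$ via the sum of squared gradients over the two branches, then conclude the function-space dynamics coincide. The only difference is that you explicitly write out the parameter-space flow for $\vtheta_1$ and $\vtheta_2$ and track the $\tfrac{\sqrt{2}}{2}$ factors, a step the paper compresses into ``it is easy to see''; your bookkeeping there is accurate.
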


\begin{proof}
  Clearly, $h(\cdot,\vtheta(t))$ under gradient flow follows dynamics
  \eqref{eq:samedy-1} with initial function $h'(\cdot,0)=h_{\mathrm{ini}}$.
  For the evolution of $h_{\mathrm{ASI}}(\vx,\vTheta(t))$, it is easy
  to see that it follows dynamics \eqref{eq:samedy-1} with initial
  function $h'(\cdot,0)=0$ if and only if $h_{\mathrm{ASI}}(\cdot,\vTheta(0))=0$
  and $k_{\vTheta_0}=k_{\vtheta_0}$. By the definition of $k_{\vTheta(0)}$,
  \begin{align*}
    k_{\vTheta_0}(\vx,\vx') & =\nabla_{\vTheta}h_{\mathrm{ASI}}(\vx,\vTheta(0))\cdot\nabla_{\vTheta}h_{\mathrm{ASI}}(\vx',\vTheta(0))\\
    &= \frac{1}{2}\nabla_{\vtheta_{1}}h(\vx,\vtheta_{1}(0))\cdot\nabla_{\vtheta_{1}}h(\vx',\vtheta_{1}(0))+\frac{1}{2}\nabla_{\vtheta_{2}}h(\vx,\vtheta_{2}(0))\cdot\nabla_{\vtheta_{2}}h(\vx',\vtheta_{2}(0))\\
    &= \frac{1}{2}\nabla_{\vtheta}h(\vx,\vtheta_{0})\cdot\nabla_{\vtheta}h(\vx',\vtheta_{0})+\frac{1}{2}\nabla_{\vtheta}h(\vx,\vtheta_{0})\cdot\nabla_{\vtheta}h(\vx',\vtheta_{0})\\
    &= \nabla_{\vtheta}h(\vx,\vtheta_{0})\cdot\nabla_{\vtheta}h(\vx',\vtheta_{0})= k_{\vtheta(0)}.
  \end{align*}
  Moreover, 
  \begin{align*}
    h_{\mathrm{ASI}}(\vx,\vTheta(0)) & =\frac{\sqrt{2}}{2}h(\vx,\vtheta_{1}(0))-\frac{\sqrt{2}}{2}h(\vx,\vtheta_{2}(0))\\
    &= \frac{\sqrt{2}}{2}h(\vx,\vtheta_{0})-\frac{\sqrt{2}}{2}h(\vx,\vtheta_{0})= 0.
  \end{align*}
  This completes the proof.
\end{proof}

\section{``doubling trick''\label{sec:doubling-trickappendix}}

By applying the ``doubling trick'' (Note that, in \citet{chizat2018note},
there is no bias term in the last layer), we obtain a new network
with network parameters $\vtheta'=(\mW^{\prime}{}^{[L]},\mW^{\prime}{}^{[L-1]},\vb^{\prime}{}^{[L-1]},\cdots,\vb^{\prime}{}^{[1]})$
initialized as $\mW^{\prime}{}^{[L]}(0)=(\mW^{[L]}(0),-\mW^{[L]}(0))$,
$\mW^{\prime}{}^{[L-1]}(0)=(\mW^{[L-1]}(0),\mW^{[L-1]}(0))$, $\vb^{\prime}{}^{[L-1]}(0)=(\vb{}^{[L-1]}(0),\vb{}^{[L-1]}(0))$
and $\mW^{\prime}{}^{[l]}(0)=\mW^{[l]}(0)$, $\vb^{\prime}{}^{[l]}(0)=\vb^{[l]}(0)$
for any $l=1,\cdots,L-2$.

In general, the kernel can be decomposed as the summation of kernels
with respect the tangent space of parameters of the neural network
in each layer, that is
\begin{align*}
k_{\vtheta}(\vx,\vx') 
  &= \nabla_{\vtheta}h(\vx,\vtheta)\cdot\nabla_{\vtheta}h(\vx',\vtheta)\\
  &= \sum_{l=1}^{L}\left[\nabla_{\mW^{[l]}}h(\vx,\vtheta)\cdot\nabla_{\mW^{[l]}}h(\vx,\vtheta)+\nabla_{\vb^{[l]}}h(\vx,\vtheta)\cdot\nabla_{\vb^{[l]}}h(\vx,\vtheta)\right].
\end{align*}

\begin{thm}
  For the DNN initialized by $\vtheta^{\prime}$, by applying the ``doubling
  trick'', for any $m\leq L-2$, 
  \begin{equation*}
    k_{\mW^{\prime}{}^{[m]}}(\vx,\vx')=0,\quad k_{\vb^{\prime}{}^{[m]}}(\vx,\vx')=0.
  \end{equation*}
  For $m=L-1,L$, and $\vTheta=\mW{}^{[L-1]},\vb{}^{[L-1]},\mW{}^{[L]}$, 
  \begin{equation*}
    k_{\vTheta^{\prime}}(\vx,\vx')=2k_{\vTheta}(\vx,\vx'),
  \end{equation*}
\end{thm}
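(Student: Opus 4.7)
My plan is to compute the partial derivatives of $h(\vx, \vtheta')$ with respect to each block of parameters layer by layer, exploiting the key fact that at initialization the two duplicated halves of layer $L-1$ produce identical pre-activations and activations (since the bottom layers are unduplicated and $\mW'^{[L-1]}, \vb'^{[L-1]}$ are just stacked copies), while the output weights differ by a sign. I will denote by $\vh^{[l]}(\vx)$ the hidden activations of the \emph{original} network, and by $\vh'^{[l]}(\vx)$ those of the doubled network. The first step is to observe that $\vh'^{[l]}(\vx) = \vh^{[l]}(\vx)$ for $l \le L-2$ and that $\vh'^{[L-1]}(\vx) = (\vh^{[L-1]}(\vx)^{\T}, \vh^{[L-1]}(\vx)^{\T})^{\T}$, so in particular $h(\vx,\vtheta'(0)) = \mW^{[L]} \vh^{[L-1]}(\vx) + (-\mW^{[L]}) \vh^{[L-1]}(\vx) = 0$.

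Next, for any parameter $\theta$ lying in a layer $m \le L-2$, I would apply the chain rule. Since layers $1,\dots,L-2$ are unmodified, one obtains a single factor $\partial \vh^{[L-2]}/\partial \theta$ that then propagates into both halves of layer $L-1$ identically (the two halves share the same weights, biases, and inputs at $t=0$, hence the same pre-activation and activation derivatives). Writing $\vh'^{[L-1]} = (\vh^{[L-1],\mathrm{top}}, \vh^{[L-1],\mathrm{bot}})$, this yields $\partial \vh^{[L-1],\mathrm{top}}/\partial \theta = \partial \vh^{[L-1],\mathrm{bot}}/\partial \theta$. Finally, multiplying by the output weights $(\mW^{[L]}, -\mW^{[L]})$ gives a complete cancellation:
\begin{equation*}
\frac{\partial h(\vx,\vtheta'(0))}{\partial \theta} \;=\; \mW^{[L]} \frac{\partial \vh^{[L-1],\mathrm{top}}}{\partial \theta} - \mW^{[L]} \frac{\partial \vh^{[L-1],\mathrm{bot}}}{\partial \theta} \;=\; 0.
\end{equation*}
Summing products of zero gradients over all such parameters yields $k_{\mW'^{[m]}}(\vx,\vx') = 0$ and $k_{\vb'^{[m]}}(\vx,\vx') = 0$ for $m \le L-2$.

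For $\mW^{[L]}$, the gradient at a coordinate in the top half equals the corresponding component of $\vh^{[L-1]}(\vx)$, and at the matching coordinate in the bottom half it equals $-(-1)\vh^{[L-1]}_j(\vx) = \vh^{[L-1]}_j(\vx)$ as well (since differentiating the output weight itself keeps only the activation factor). Summing squares over the $2m_{L-1}$ coordinates contributes each term twice, giving $k_{\mW'^{[L]}}(\vx,\vx') = 2 k_{\mW^{[L]}}(\vx,\vx')$. For $\mW^{[L-1]}$ (and analogously $\vb^{[L-1]}$), the gradient at a top-half entry carries a factor $\mW^{[L]}_i$ times the (shared) activation-derivative times $\vh^{[L-2]}_j(\vx)$, while the corresponding bottom-half entry carries an identical expression with $\mW^{[L]}_i$ replaced by $-\mW^{[L]}_i$. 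Taking the inner product in the kernel sum, each pair of signs becomes $(+)(+) + (-)(-) = 2$ times the original pair's contribution, so $k_{\vTheta'}(\vx,\vx') = 2 k_{\vTheta}(\vx,\vx')$ for $\vTheta \in \{\mW^{[L-1]}, \vb^{[L-1]}\}$.

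The only delicate step is making precise the identification of ``matching coordinates'' between the two halves and verifying that the activation-derivative factors agree at initialization; both follow directly from the block structure of $\mW'^{[L-1]}$ and $\vb'^{[L-1]}$, which guarantees the two halves of layer $L-1$ share the same pre-activation vector. No nontrivial estimation is required — the result is a clean consequence of the chain rule combined with the antisymmetry in $\mW'^{[L]}$ and the symmetry in $\mW'^{[L-1]}, \vb'^{[L-1]}$.
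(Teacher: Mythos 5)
Your proposal is correct and follows essentially the same route as the paper: both arguments hinge on the fact that the two halves of layer $L-1$ share identical activations and activation derivatives at initialization, so the antisymmetric output weights make the backpropagated factor $\mW^{\prime[L]}(0)\,\vs^{\prime[L-1]}(\vx,0)$ vanish (killing the gradients, hence the kernels, for layers $m\le L-2$), while for layers $L-1$ and $L$ the duplicated coordinates contribute identical (up to sign) gradient entries whose products sum to twice the original kernel. The only cosmetic blemish is the "$-(-1)\vh^{[L-1]}_j$" phrasing for the bottom-half derivative of $\mW^{\prime[L]}$ — the derivative with respect to an output weight is simply the corresponding activation, with no sign — but your conclusion there is right.
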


\begin{proof}
  For any $m\leq L-2$,
  \begin{equation*}
    \nabla_{\mW_{i,j}^{\prime[m]}}h'(\vx,\vtheta'(0))=\left(\prod_{l=m+1}^{L-1}\mW^{\prime}{}^{[l+1]}(0)\vs^{\prime}{}^{[l]}(\vx,0)\right)\mW^{\prime}{}_{j}^{[m+1]}(0)s^{\prime}{}_{j}^{[m]}(\vx,0)h^{\prime}{}_{i}^{[m-1]}(\vx,0),
  \end{equation*}
  \begin{equation*}
    \nabla_{b^{\prime}{}_{j}^{[m]}}h'(\vx,\vtheta'(0))=\left(\prod_{l=m+1}^{L-1}\mW^{\prime}{}^{[l+1]}(0)\vs^{\prime}{}^{[l]}(\vx,0)\right)\mW^{\prime}{}_{j}^{[m+1]}(0)s^{\prime}{}_{j}^{[m]}(\vx,0),
  \end{equation*}
  where $s^{\prime}{}_{i}^{[l]}(\vx,t)=s(\mW_{i}^{[l]}(t)\cdot \vh^{[l-1]}(\vx)+b_{i}^{[l]}(t))$,
  for $i=1,\cdots,m_{l}$, $s(\vx)=\frac{\D \sigma(\vx)}{\D \vx}$.
  Because 
  \begin{equation*}
    \mW^{\prime}{}^{[L]}(0)\vs^{\prime}{}^{[L-1]}(\vx,0)=\mW^{[L]}(0)\vs^{[L-1]}(\vx,0)-\mW^{[L]}(0)\vs^{[L-1]}(\vx,0)=0,
  \end{equation*}
  for any $m\leq L-2$, we obtain $\nabla_{\mW_{i,j}^{\prime[m]}}h'(\vx,\vtheta'(0))=\vec{0}$
  and $\nabla_{\vb^{\prime}{}_{j}^{[m]}}h'(\vx,\vtheta'(0))=\vec{0}$, which leads
  to $k_{\mW^{\prime}{}^{[m]}}(\vx,\vx')=0$ and $k_{\vb^{\prime}{}^{[m]}}(\vx,\vx')=0$.
  For layer $L-1$ and $L$, similarly, we have
  \begin{equation*}
    k_{\mW^{\prime}{}^{[L-1]}}(\vx,\vx')=2k_{\mW^{[L-1]}}(\vx,\vx'),
  \end{equation*}
  \begin{equation*}
    k_{\vb^{\prime}{}^{[L-1]}}(\vx,\vx')=2k_{\vb^{[L-1]}}(\vx,\vx'),
  \end{equation*}
  \begin{equation*}
    k_{\mW^{\prime}{}^{[L]}}(\vx,\vx')=2k_{\mW^{[L]}}(\vx,\vx').
  \end{equation*}
\end{proof}
Therefore, by applying the ``doubling trick'', $h'(\vx,\vtheta(0))$
is offset to $0$. However, the kernel of layers $L-1$ and $L$ doubles,
whereas the kernel of layers $m\leq L-2$ completely vanishes,
which could have large impact on the training dynamics as well as
the generalization performance of DNN output.


\end{document}